\newtheorem{corollary}{Corollary}
\newtheorem{lemma}{Lemma}
\theoremstyle{definition}
\newtheorem{definition}{Definition}
\theoremstyle{plain}
\newtheorem{proposition}{Proposition}
\newtheorem{theorem}{Theorem}
\theoremstyle{definition}
\newtheorem{example}{Example}
\DeclareFontFamily{U}{MnSymbolC}{}
\DeclareSymbolFont{MnSyC}{U}{MnSymbolC}{m}{n}
\DeclareFontShape{U}{MnSymbolC}{m}{n}{
  <-6>  MnSymbolC5
  <6-7>  MnSymbolC6
  <7-8>  MnSymbolC7
  <8-9>  MnSymbolC8
  <9-10> MnSymbolC9
  <10-12> MnSymbolC10
  <12->   MnSymbolC12%
}{}
\newcommand{\powerset}[1]{2^{#1}}
\renewcommand{\k}{\mathit{k}}
\tikzstyle{snode} = [draw,rounded corners,minimum size=4mm,font=\small,fill=gray!15,label distance=-1mm,node distance=0.1cm]
\newcommand{\attacks}{\rightarrowtail}
\newcommand{\AF}{\mathit{F}}
\newcommand{\AG}{\mathit{G}}
\newcommand{\AH}{\mathit{H}}
\newcommand{\pref}{\mathit{pr}}
\newcommand{\stage}{\mathit{stg}}
\newcommand{\comp}{\mathit{co}}
\newcommand{\ideal}{\mathit{id}}
\newcommand{\eager}{\mathit{eg}}
\newcommand{\cf}{\textit{cf}}
\newcommand{\cft}{\textit{cf2}}
\newcommand{\ad}{\textit{ad}}
\newcommand{\adm}{\textit{ad}}
\newcommand{\co}{\textit{co}}
\newcommand{\gr}{\textit{gr}}
\newcommand{\grd}{\gr}
\newcommand{\strad}{\textit{sad}}
\newcommand{\sad}{\textit{sad}}
\newcommand{\ground}{\gr}
\newcommand{\na}{\textit{na}}
\newcommand{\stb}{\textit{stb}}
\newcommand{\pr}{\textit{pr}}
\newcommand{\stg}{\textit{stg}}
\newcommand{\sta}{\textit{sta}}
\newcommand{\semi}{\textit{ss}}
\newcommand{\id}{\textit{id}}
\newcommand{\eg}{\textit{eg}}
\newcommand{\interm}[2]{$#1$-$#2$-intermediate}
\newcommand{\Ss}{\mathcal{S}}
\newcommand{\E}{S}
\newcommand{\sm}{\setminus}
\newcommand{\N}{\mathbb{N}}
\newcommand{\To}{\Rightarrow}
\newcommand{\oT}{\Leftarrow}
\newcommand{\ToT}{\Leftrightarrow}
\newcommand{\F}{\AF}
\newcommand{\G}{\AG}
\renewcommand{\H}{\AH}
\newcommand{\U}{\mathcal{U}}
\renewcommand{\r}{\mathfrak{r}}
\newcommand{\afs}{\mathscr{A}}
\title{Verifiability of Argumentation Semantics\thanks{This research has been supported by DFG (project BR 1817/7-1) and FWF (projects I1102 and P25521).}}
\author{Ringo Baumann\\
Leipzig University\\
Germany\\
\And
Thomas Linsbichler \and Stefan Woltran\\
TU Wien\\
Austria
}
\begin{document}

\maketitle

\begin{abstract}
\begin{quote}

Dung's abstract argumentation theory is a widely used formalism to model
conflicting information and to draw conclusions in such situations. Hereby,
the knowledge is represented by so-called argumentation frameworks (AFs) and
the reasoning is done via semantics extracting acceptable sets. All reasonable
semantics are based on the notion of conflict-freeness which means that
arguments are only jointly acceptable when they are not linked within the AF.
In this paper, we study the question which information on top of
conflict-free sets is needed to compute extensions of a semantics at hand.
We introduce a hierarchy of so-called verification classes specifying
the required amount of information. We show that well-known standard semantics
are exactly verifiable through a certain such class.  Our framework
also gives a means to study semantics lying inbetween known semantics,
thus contributing to a more abstract understanding of the different features
argumentation semantics offer.
\end{quote}
\end{abstract}

\section{Introduction}

In the late 1980s the idea of using \textit{argumentation}
to model nonmonotonic reasoning emerged (see \cite{Lou87,Pol87} as well as \cite{Prak01} for excellent overviews). 
Nowadays argumentation
theory is a vibrant subfield of 
Artificial Intelligence, covering aspects of
knowledge representation, multi-agent systems, and also philosophical
questions. Among other approaches which have been proposed for capturing representative
patterns of inference in argumentation theory \cite{strarg14}, Dung's abstract argumentation frameworks (AFs) \cite{dung95} play an important role within this research area. At the heart of Dung's approach lie the so-called \textit{argumentation semantics} (cf. \cite{baro11} for an excellent overview). Given an AF $\AF$, which is set-theoretically just a directed graph encoding arguments and attacks between them, a certain argumentation semantics $\sigma$ returns acceptable sets of arguments $\sigma(\AF)$, so-called
$\sigma$-\textit{extensions}. Each of these sets represents a reasonable position w.r.t.\ $\AF$ and $\sigma$. 

Over the last 20 years 
a series of abstract
argumentation semantics were introduced. The
motivations of these semantics range from the desired treatment of specific examples
to fulfilling a number of abstract principles. The comparison via abstract criteria of the different semantics available is a topic which emerged quite recently in the community (\cite{BaroniG07a} can be seen as the first paper in this line). 
Our work takes a further step towards a comprehensive understanding of argumentation semantics. In particular, we study the following question: Do we really need the entire AF $\AF$ to compute a certain argumentation semantics $\sigma$? In other words, is it possible to unambiguously determine acceptable sets  w.r.t.\ $\sigma$, given only partial information of the underlying framework $\AF$. In order to solve this problem let us start with the following reflections:

\begin{enumerate}
	\item As a matter of fact, one basic requirement of almost all existing semantics\footnote{See \cite{JakobovitsV99,Arieli12,GrossiM15} for exemptions.} is that of conflict-freeness, i.e. arguments within a reasonable position are not allowed to attack each other. Consequently, knowledge about conflict-free sets is an essential part for computing semantics.
	\item The second step is to ask the following: Which information on top on conflict-free sets has to be added? Imagine the set of conflict-free sets given by $\{\emptyset, \{a\}, \{b\}\}$. Consequently, there has to be at least one attack between $a$ and $b$. Unfortunately, this information is not sufficient to compute any standard semantics (except naive extensions, which are defined as $\subseteq$-maximal conflict-free sets) since we know nothing precise about the neighborhood of $a$ and $b$. The following three AFs possess exactly the mentioned conflict-free sets, but differ with respect to other 
	\begin{center}
\begin{tikzpicture}
	\node (a1) at (-0.6,0) [circle, thick, draw, label = left:$\F:$]{$a$};
    \node (b1) at (0.4,0) [circle, thick, draw]{$b$};
    
    \node (a2) at (2,0) [circle, thick, draw, label = left:$\G:$] {$a$};
    \node (b2) at (3.0,0) [circle, thick, draw] {$b$};
		
		\node (a3) at (4.6,0) [circle, thick, draw, label = left:$\H:$] {$a$};
    \node (b3) at (5.6,0) [circle, thick, draw] {$b$};

\draw[->,thick] (a1) to [thick, bend right] (b1);
\draw[->,thick] (b2) to [thick, bend right] (a2);
\draw[->,thick] (a3) to [thick, bend right] (b3);
\draw[->,thick] (b3) to [thick, bend right] (a3);
\end{tikzpicture}
\end{center}
	
	\item The final step is to try to minimize the added information. That is, which kind of knowledge about the neighborhood is somehow dispensable in the light of computation? Clearly, this will depend on the considered semantics. For instance, in case of stage semantics \cite{Verheij96}, which requests conflict-free sets of maximal range, we do not need any information about incoming attacks. This information can not be omitted in case of admissible-based semantics since incoming attacks require counterattacks.

\end{enumerate}

\noindent
The above considerations motivate the introduction of so-called verification classes specifying a certain amount of information. In a first step, we study the relation of these classes to each other. We therefore introduce the notion of being \textit{more informative} capturing the intuition that a certain class can reproduce the information of an other. We present a hierarchy w.r.t.\ this ordering. The hierarchy contains 15 different verification classes only. This is due to the fact that many syntactically different classes collapse to the same amount of information. 

We then formally define the essential property of a semantics $\sigma$ being \textit{verifiable} w.r.t.\ a certain verification class. We present a general theorem stating that any \textit{rational} semantics is exactly verifiable w.r.t.\ one of the $15$ different verification classes. Roughly speaking, a semantics is rational if attacks inbetween two self-loops can be omitted without affecting the set of extensions. An important aside hereby is that even the most informative class contains indeed less information than the entire framework by itself.  

In this paper we consider a representative set of %
standard semantics. All of them satisfy rationality and thus, are exactly verifiable w.r.t.\ a certain class. Since the theorem does not provide an answer to which verification class perfectly matches a certain rational semantics we study this problem one by one for any considered semantics. As a result, only $6$ different classes are essential to classify the considered standard semantics.

In the last part of the paper we study an application of the concept of verifiability. More precisely, we address the question of strong equivalence for semantics lying inbetween known semantics, so-called intermediate semantics. 
Strong equivalence is the natural counterpart to ordinary equivalence in monotonic theories
(see \cite{strong,Bau16} for abstract argumentation and
\cite{maher86,strongLP,strongcausal,strongnonmon} for other nonmonotonic theories). We provide characterization theorems relying on the notion of verifiability and thus, contributing to a more abstract understanding of the different features argumentation semantics offer. Besides these main results, we 
also give new characterizations for strong equivalence with respect to 
naive extensions and strong admissible sets.

\section{Preliminaries}
An \textit{argumentation framework} (AF) $\AF = (A,R)$ is a directed graph 
whose nodes $A \subseteq \U$
(with $\U$ being an infinite set of arguments, so-called \textit{universe})
are interpreted as \textit{arguments} and whose edges $R \subseteq A \times A$
represent \textit{conflicts} between them.
We assume that all AFs possess finitely\footnote{Finiteness of AFs is a common assumption in argumentation papers. A systematic study of the infinite case has begun quite recently (cf. \cite{BauS14} for an overview).} many arguments only and denote the collection of all AFs by $\afs$.
If $(a,b)\in R$ we say that $a$ \textit{attacks} $b$.
Alternatively, we write $a\attacks b$ as well as,
for some $S \subseteq A$,
$a \attacks S$ or $S\attacks b$ if there is some $c\in S$ attacked by $a$ or attacking $b$, respectively. 
An argument $a\in A$ is \textit{defended} by a set $S\subseteq A$
if for each $b\in A$ with $b\attacks a$, $S \attacks b$.
We define the \textit{range} of $S$ (in $\AF$)
as $S^+_F = S\cup \{a\mid S\attacks a\}$. Similarly, we use $S^-_\AF$
to denote the \textit{anti-range} of $S$ (in $\AF$) as
$S\cup \{a\mid a\attacks S\}$. 
Furthermore, we say that a set
$S$ is \textit{conflict-free} (in $\AF$) if there is no
argument $a \in S$ s.t. $S\attacks a$.
The set of all conflict-free sets of an AF $\AF$ is denoted by $\cf(\AF)$.
For an AF $\F = (B,S)$ we use $A(\F)$ and $R(\F)$ to refer to $B$ and $S$, respectively. 
Furthermore, we use $L(\AF) = \{a\mid (a,a)\in R(\AF)\}$ for the set of all self-defeating arguments. 
Finally, we introduce the union of AFs
$\AF$ and $\AG$ as $\AF\cup \AG = (A(\AF)\cup A(\AG),R(\AF)\cup R(\AG))$.

\subsection{Semantics}

A \textit{semantics} $\sigma$ assigns to each $\AF = (A,R)$ a set ${\sigma}(\AF)\subseteq\powerset{A}$ where the elements are called $\sigma$-\textit{extensions}. Numerous semantics are available. Each of them captures
different intuitions about how to reason about conflicting knowledge. 
We consider $\sigma\in\{\ad,\na,\stb,\pr,\co,\gr,\semi,\stg,\id,\eg\}$ for
admissible, naive, stable, preferred, complete, grounded, semi-stable, stage,
ideal, and eager semantics \cite{dung95,CaminadaCD12,Verheij96,DungMT07,Caminada07}.

\begin{definition}
\label{def:semantics}
      Given an AF $\AF = (A,R)$ and let $\E\subseteq A$.
      \begin{enumerate}
	\item { $\E\in\adm(\AF)$ iff $\E\in\cf(\AF)$ and each $a\in \E$ is defended by~$\E$,}
        \item { $\E\in\na(\AF)$ iff $\E\in\cf(\AF)$ and there is no $\E'\in\cf(\AF)$ s.t.\ $\E\subsetneq\E'$,}
	\item { $\E\in\stb(\AF)$ iff $\E\in\cf(\AF)$ and
	  $\E^+_\F=A$,}
	\item { $\E\in\pref(\AF)$ iff $\E\in\adm(\AF)$ and there is no $\E'\in\adm(\AF)$ s.t.\ $\E\subsetneq\E'$,}
	\item { $\E\in\comp(\AF)$ iff $\E\in\adm(\AF)$ and for
	  any $a\in A$ defended by~$\E$, %
          $a\in \E$,}
	\item { $\E\in\ground(\AF)$ iff $\E\in\comp(\AF)$ and
	  there is no $\E'\in\comp(\AF)$ s.t.\ $\E'\subsetneq\E$,}
	\item { $\E\in\semi(\AF)$ iff $\E\in\adm(\AF)$ and
	  there is no $\E'\in\adm(\AF)$ s.t.\ $\E^+_\F\subsetneq\E'^+_\F$,}
	\item { $\E\in\stg(\AF)$ iff $\E\in\cf(\AF)$ and 
	  there is no $\E'\in\cf(\AF)$ s.t.\ $\E^+_\F\subsetneq\E'^+_\F$,}
	\item { $\E\in\ideal(\AF)$ iff $\E\in\adm(\AF)$, $\E\subseteq
	  \bigcap\pref(\AF)$ and there is no $\E'\in\adm(\AF)$
	  satisfying $\E'\subseteq \bigcap\pref(\AF)$ s.t.~
	  $\E\subsetneq\E'$},
	\item { $\E\in\eager(\AF)$ iff $\E\in\adm(\AF)$, $\E\subseteq
	  \bigcap\semi(\AF)$ and there is no $\E'\in\adm(\AF)$
	  satisfying $\E'\subseteq \bigcap\semi(\AF)$ s.t.~
	  $\E\subsetneq~\E'$.}
      \end{enumerate}
    \end{definition}
	
For two semantics $\sigma$, $\tau$ we use $\sigma\subseteq\tau$ to indicate that
$\sigma(\AF)\subseteq\tau(\AF)$ for each AF $\AF \in \afs$.
If we have $\rho\subseteq\sigma$ and $\sigma\subseteq\tau$ for semantics $\rho,\sigma,\tau$,
we say that $\sigma$ is \textit{\interm{\rho}{\tau}}.
Well-known relations between semantics are
$\stb\subseteq\semi\subseteq\pref\subseteq\comp\subseteq\adm$,
meaning, for instance,
that $\semi$ is \interm{\stb}{\pref}.

\begin{definition}
\label{def:semantics_conditions}
We call a semantics $\sigma$
\emph{rational} if self-loop-chains are irrelevant.
That is, for every AF $\F$ it holds that $\sigma(\F) = \sigma(\F^l)$, where
$\F^l = (A(\F),
         R(\F) \sm \{(a,b) \in R(\F) \mid (a,a), (b,b) \in R(\F), a \neq b\})$.
\end{definition}

Indeed, all semantics introduced in Definition~\ref{def:semantics} are rational. 
A prominent semantics that is based on conflict-free sets, but is not rational 
is the $\cft$-semantics~\cite{BaroniGG05a}, since here chains of self-loops can have an 
influence on the SCCs of an AF (see also \cite{GagglW13}).

\subsection{Equivalence and Kernels}

The following definition captures the two main notions of equivalence available for non-monotonic formalisms, namely \textit{ordinary} (or \textit{standard}) \textit{equivalence} and \textit{strong} (or \textit{expansion}) \textit{equivalence}. A detailed overview of equivalence notion including their relations to each other can be found in \cite{zoo,zoo2}.
\begin{definition} Given a semantics $\sigma$. Two AFs $\AF$ and $\AG$ are 

\begin{itemize}
	\item \textit{standard equivalent} w.r.t.\ $\sigma$ ($\AF\equiv^\sigma\AG$) iff $\sigma(\AF) = \sigma(\AG)$,
	\item \textit{expansion equivalent} w.r.t.\ $\sigma$ ($\AF\equiv^\sigma_E\AG$) iff for all AFs~$\AH$: $\AF\cup\AH\equiv^\sigma\AG\cup\AH$ 
\end{itemize}

\end{definition}

Expansion equivalence can be decided syntactically via so-called \textit{kernels} \cite{strong}.
A kernel is a function $\k: \afs\mapsto\afs$ mapping each AF $\AF$ to another AF $\k(\AF)$
(which we may also denote as $\AF^\k$).
Consider the following definitions.

\begin{definition} \label{def:kernel}
Given an AF $\AF = (A,R)$ and a semantics $\sigma$. We define $\sigma$-kernels $\AF^{\k(\sigma)} = \left(A,R^{\k(\sigma)}\right)$ whereby

\begin{tabbing}
$R^{\k(\stb)}\! = R\ \sm \{(a, b) \mid$ \= $\ a \neq b, (a, a) \in R\}$,\\
 $R^{\k(\adm)} = R\ \sm \{(a, b) \mid a \neq b, (a, a) \in R,$\\
 \> $\{(b,a),(b,b)\}\cap R \neq\emptyset\}$,\\
 $R^{\k(\ground)} = R\ \sm \{(a, b) \mid a \neq b, (b, b) \in R,$\\ 
   \> $\{(a,a),(b,a)\}\cap R \neq\emptyset\}$,\\
$R^{\k(\comp)} = R\ \sm \{(a, b) \mid a \neq b, (a,a),(b,b)\in R \}$.
\end{tabbing}
\end{definition}

We say that a relation $\equiv\ \subseteq \afs\times\afs$ is \textit{characterizable through kernels} if there is a kernel $\k$, s.t. $\AF\equiv\AG$ iff $\AF^k=\AG^k$. Moreover, we say that a semantics $\sigma$ is \textit{compatible with a kernel} $k$ if $\F\equiv^{\sigma}_E\G$ iff $\F^k=\G^k$. All semantics (except naive semantics) considered in this paper are compatible with one of the four kernels introduced above. %
In the next section, we will complete these results taking naive semantics
and strong admissible sets into account.

\begin{theorem}{\cite{strong,equisurvey}} \label{the:strong} For any AFs $\F$ and $\G$,
\begin{enumerate}
	\item $\AF\equiv^{\sigma}_E \AG \ToT \AF^{\k(\sigma)} = \AG^{\k(\sigma)}$ with $\sigma\in\{\stb,\adm,\comp,\ground\}$,
	\item \mbox{$\AF\equiv^{\tau}_E \AG \ToT \AF^{\k(\adm)} = \AG^{\k(\adm)}$ with $\tau\in\{\pref,\ideal,\semi,\eager\}$,} 
	\item $\AF\equiv^{\stage}_E \AG \ToT \AF^{\k(\stb)} = \AG^{\k(\stb)}$. 
\end{enumerate}

\end{theorem}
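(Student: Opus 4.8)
The plan is to prove each of the three biconditionals by treating the two directions separately and relying on the common machinery of \emph{kernel lemmas}. The easy direction throughout is ``$\oT$''. Here I would show that for each relevant kernel, the deleted attacks are \emph{dormant}: they can never influence $\sigma$-extensions, and this remains true after taking the union with an arbitrary $\AH$. The crucial observation is that every attack $(a,b)$ removed by any of the four kernels satisfies $(a,a)\in R$ or $(b,b)\in R$, so one of its endpoints lies in $L(\AF)$ and hence can never belong to a conflict-free set, neither in $\AF$ nor in any expansion $\AF\cup\AH$. I would formalise this with a lemma asserting $\sigma\big((\AF\cup\AH)^{\k(\sigma)}\big) = \sigma(\AF\cup\AH)$, together with the structural fact that $(\AF\cup\AH)^{\k(\sigma)}$ depends on $\AF$ only through $\AF^{\k(\sigma)}$. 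Given $\AF^{\k(\sigma)}=\AG^{\k(\sigma)}$, chaining these equalities yields $\sigma(\AF\cup\AH)=\sigma(\AG\cup\AH)$ for every $\AH$, i.e.\ $\AF\equiv^{\sigma}_E\AG$.

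For the harder direction ``$\To$'' I would argue by contraposition: assuming $\AF^{\k(\sigma)}\neq\AG^{\k(\sigma)}$, construct a single companion framework $\AH$ witnessing $\sigma(\AF\cup\AH)\neq\sigma(\AG\cup\AH)$. I would split into the case $A(\AF)\neq A(\AG)$, handled uniformly by attaching a fresh undefeated argument to the surplus vertex, and the case $A(\AF)=A(\AG)$ in which the kernel attack relations differ on some pair $(a,b)$. In the latter case the gadget introduces fresh arguments around $a$ and $b$ so that the presence or absence of $(a,b)$ in the kernel becomes observable through the extensions; which gadget is needed depends on $\sigma$ and on which of the self-loop side conditions fails, and this is precisely what gives rise to the four distinct kernels of Definition~\ref{def:kernel}.

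The main obstacle will be this second direction, and within it the verification that a genuine kernel difference survives into $\AF\cup\AH$. Because the kernels delete attacks under side conditions referring to \emph{other} attacks (e.g.\ $\{(b,a),(b,b)\}\cap R\neq\emptyset$ for $\adm$), a naive attack-by-attack comparison need not reveal a kernel mismatch, so I would first establish that each kernel is idempotent and that frameworks with equal kernels stay equal after applying the kernel to any expansion. Parts~2 and~3 then reduce to Part~1: for $\tau\in\{\pref,\ideal,\semi,\eager\}$ I would use that each such $\tau$ is definable from $\adm$ (via $\subseteq$-maximality, range-maximality, or intersection with $\pref$ respectively $\semi$) in a way preserved by the admissible kernel, so that equality of $\adm$-kernels propagates to equality of $\tau$-extensions under all expansions; and for $\stage$ I would exploit that outgoing attacks of a self-looping argument affect neither conflict-freeness nor the range of any conflict-free set, so the deletions performed by $\k(\stb)$ are invisible to $\stage$, yielding its compatibility with the stable kernel.
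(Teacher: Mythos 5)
First, a point of reference: the paper contains no proof of this theorem — it is imported verbatim from Oikarinen and Woltran (2011) and Baumann and Woltran (2014), so your proposal can only be compared with the strategy of those cited works. Your overall architecture does match it: kernel invariance $\sigma(\F) = \sigma(\F^{\k(\sigma)})$, the congruence property that equal kernels remain equal after union with any $\H$, and contraposition with witness constructions for the hard direction. This is also exactly the two-lemma scheme the paper itself uses later (e.g.\ in the proof of Proposition~\ref{pro:cf-range-verifiable}).

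The genuine gap lies in what you call the crucial observation for the easy direction. The inference ``one endpoint of every deleted attack carries a self-loop, hence lies in no conflict-free set, hence the attack can never influence $\sigma$-extensions'' is unsound: membership in conflict-free sets is not the only channel through which an attack acts. For admissibility-based semantics, an attack $(a,b)$ whose source $a$ is self-looping still obliges any admissible set containing $b$ to counterattack $a$. Concretely, for $\F = (\{a,b\},\{(a,a),(a,b)\})$ one has $\F^{\k(\stb)} = (\{a,b\},\{(a,a)\})$ and $\adm(\F) = \{\emptyset\} \neq \{\emptyset,\{b\}\} = \adm\left(\F^{\k(\stb)}\right)$, although the deleted attack has a self-looping endpoint. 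Were your dormancy argument sound, it would show that the stable kernel — which removes every attack with self-looping source — is compatible with admissible, complete and grounded semantics as well, collapsing the four kernels of Definition~\ref{def:kernel} into one; the very reason four distinct kernels exist is that dormancy must be argued semantics-by-semantics using exactly the kernel's side conditions (for $\k(\adm)$: either $(b,b)\in R$, so $b$ is excluded from all admissible sets anyway, or $(b,a)\in R$, so $b$ defends itself against $a$ — only then is $(a,b)$ invisible to $\adm$). Your plan invokes these side conditions only inside the congruence lemma for the hard direction, not where they are indispensable, namely in proving $\sigma(\F\cup\H) = \sigma\left((\F\cup\H)^{\k(\sigma)}\right)$. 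A secondary weakness: your claim that parts 2 and 3 ``reduce to Part 1'' covers only the $\oT$ direction (the admissible kernel preserves $\pr$, $\id$, $\semi$, $\eg$; the stable kernel preserves $\stg$); the $\To$ direction for these semantics does not follow from part 1, since $\tau$-equivalence for $\tau \in \{\pr,\id,\semi,\eg\}$ gives no direct handle on $\adm$-equivalence, and it needs its own per-semantics gadget constructions — which is where the bulk of the work in the cited papers actually lies.
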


\section{Complementing Previous Results}

In order to provide an exhaustive analysis of intermediate semantics (confer penultimate section) we provide missing kernels for naive semantics as well as strongly admissible sets. We start with the so-called \textit{naive kernel} characterizing expansion equivalence w.r.t.\ naive semantics. As an aside, the following kernel is the first one which adds attacks to the former attack relation. 

\begin{definition} \label{def:naivekernel}
Given an AF $\AF = (A,R)$. We define the \textit{naive kernel} $\AF^{k(\na)} = \left(A,R^{k(\na)}\right)$ whereby
$R^{k(\na)} = R\ \cup \left\{(a, b) \mid a \neq b, \{(a,a),(b,a),(b,b)\}\cap R \neq\emptyset\right\}.$
	\end{definition}

The following example illustrates the definition above.

\begin{example} Consider the AFs $\F$ and $\G$. Note that $\na(\F) = \na\left(\G\right) = \left\{\{a,c\},\{a,d\}\right\}$. Consequently, $\F \equiv^{\na} \G$.

\begin{center}
\begin{tikzpicture}
	\node (a) at (-0.6,0) [circle, thick, draw, label = left:$\F:$]{$a$};
    \node (b) at (0.6,0) [circle, thick, draw]{$b$};
    \node (c) at (1.8,0) [circle, thick, draw] {$c$};
    \node (d) at (3.0,0) [circle, thick, draw] {$d$};
		
		\node (a') at (-0.6,-1.0) [circle, thick, draw, label = left:$\G:$]{$a$};
    \node (b') at (0.6,-1.0) [circle, thick, draw]{$b$};
    \node (c') at (1.8,-1.0) [circle, thick, draw] {$c$};
    \node (d') at (3.0,-1.0) [circle, thick, draw] {$d$};

\draw[->,thick] (b) to [thick,loop,distance=0.5cm] (b);
\draw[->,thick] (d) to [thick, bend right] (c);
\draw[->,thick] (b') to [thick,loop,distance=0.5cm] (b');
\draw[->,thick] (c') to [thick, bend right] (d');
\draw[->,thick] (a') to [thick, bend right] (b');
\draw[->,thick] (b') to [thick, bend right] (c');

\end{tikzpicture}
\end{center}

In accordance with Definition~\ref{def:naivekernel} we observe that both AFs possess the same naive kernel $\H = \F^{\k(\na)} = \G^{\k(\na)}$. 

\begin{center}
\begin{tikzpicture}
	\node (a) at (-0.6,0) [circle, thick, draw, label = left:$\H:$]{$a$};
    \node (b) at (0.6,0) [circle, thick, draw]{$b$};
    \node (c) at (1.8,0) [circle, thick, draw] {$c$};
    \node (d) at (3.0,0) [circle, thick, draw] {$d$};

\draw[->,thick] (b) to [thick,loop,distance=0.5cm] (b);
\draw[->,thick] (d) to [thick, bend right] (c);
\draw[->,thick] (c) to [thick, bend right] (d);
\draw[->,thick] (a) to [thick, bend right] (b);
\draw[->,thick] (b) to [thick, bend right] (a);
\draw[->,thick] (b) to [thick, bend right] (c);
\draw[->,thick] (c) to [thick, bend right] (b);

\end{tikzpicture}
\end{center}
 The following theorem proves that possessing the same kernels is necessary as well as sufficient for being strongly equivalent, i.e. $\F \equiv^{\na}_{E} \G$.

\end{example}

\begin{theorem} For all AFs $\AF$,$\AG$, $$\AF\equiv^{\na}_E\AG \ToT \AF^{k(\na)}=\AG^{k(\na)}.$$
\end{theorem}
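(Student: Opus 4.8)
The plan is to establish the two directions through a handful of structural lemmas about the naive kernel, in the spirit of the characterizations collected in Theorem~\ref{the:strong}. First I would record the decisive observation that the naive kernel leaves the conflict-free sets untouched: for every AF $\AF$ one has $\cf(\AF) = \cf(\AF^{\k(\na)})$, and hence $\na(\AF) = \na(\AF^{\k(\na)})$. Indeed, $R^{\k(\na)}$ only adds a pair $(a,b)$ when $a$ or $b$ carries a self-loop, or when the reverse attack $(b,a)$ is already present. A conflict-free set contains no self-loop, and the newly added reverse attacks create no conflict among loop-free arguments that was not already there; so no loop-free set can gain or lose conflict-freeness. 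This lemma is what lets me pass freely between an AF and its kernel when evaluating $\na$.

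Next I would pin down exactly when two naive kernels coincide. Analysing $R^{\k(\na)}$ shows that $\AF^{\k(\na)} = \AG^{\k(\na)}$ holds iff $A(\AF)=A(\AG)$, $L(\AF)=L(\AG)$, and $\AF$ and $\AG$ agree on the symmetric conflicts between loop-free arguments (for distinct $a,b$ neither of which is a self-loop, the kernel contains both $(a,b)$ and $(b,a)$ precisely when $\AF$ has an attack in either direction, while every pair involving a self-loop becomes fully connected, and self-loops themselves are never added). With this description the $\oT$ direction is immediate: if $\AF$ and $\AG$ share their kernel, then for every $\AH$ the frameworks $\AF\cup\AH$ and $\AG\cup\AH$ have the same arguments, the same self-loops $L(\AF)\cup L(\AH)=L(\AG)\cup L(\AH)$, and the same loop-free conflicts, whence $(\AF\cup\AH)^{\k(\na)} = (\AG\cup\AH)^{\k(\na)}$. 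Combined with the first lemma this gives $\na(\AF\cup\AH) = \na((\AF\cup\AH)^{\k(\na)}) = \na((\AG\cup\AH)^{\k(\na)}) = \na(\AG\cup\AH)$, i.e.\ $\AF\equiv^{\na}_E\AG$.

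For the $\To$ direction I would argue by contraposition, exhibiting for distinct kernels a witness $\AH$ with $\na(\AF\cup\AH)\neq\na(\AG\cup\AH)$. The workhorse is the identity $\bigcup\na(\AF) = A(\AF)\sm L(\AF)$, since every loop-free argument survives in some maximal conflict-free set. If the argument sets or the self-loop sets differ, some argument $a$ is loop-free in exactly one of the two frameworks, so that $a$ lies in an extension on one side and in none on the other; the only delicate case, where $a$ is absent on one side and self-attacking on the other, is neutralised by choosing $\AH = (\{a\},\emptyset)$, which makes $a$ loop-free there. If instead the frameworks share arguments and self-loops but disagree on a loop-free conflict $\{a,b\}$, then $\{a,b\}$ extends to a naive extension on the side lacking the conflict, and this set is not even conflict-free on the other side; here already $\AH = (\emptyset,\emptyset)$ separates them.

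The step I expect to require the most care is the soundness ($\oT$) direction, and within it the claim that taking kernels commutes with union in the sense that equal kernels of $\AF,\AG$ force equal kernels of $\AF\cup\AH$ and $\AG\cup\AH$. The delicacy is that $\AH$ may add a self-loop to a previously loop-free argument, or add an attack opposite to an existing one, so I must verify that the only data the kernel ultimately reads off --- the loop-free conflicts among the arguments that remain loop-free in the union --- are affected identically on both sides. The characterization of equal kernels from the second step is tailored precisely to make this verification go through, since the common self-loop set guarantees that the two unions have the same loop-free arguments to begin with.
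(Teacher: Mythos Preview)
Your proposal is correct but takes a different route from the paper. The paper invokes the result from \cite{equisurvey} that $\AF\equiv^{\na}_E\AG$ holds iff $A(\AF)=A(\AG)$ and $\na(\AF)=\na(\AG)$, and then only has to show that $\AF^{\k(\na)}=\AG^{\k(\na)}$ is equivalent to this pair of conditions; both directions are argued elementwise on the attack relation, using the same observation $\cf(\AF)=\cf(\AF^{\k(\na)})$ that you record. You instead bypass the cited characterization entirely: for $(\oT)$ you give a structural description of when two naive kernels coincide (same arguments, same self-loops, same undirected conflicts among loop-free arguments), verify that this description is preserved under union with any $\AH$, and conclude via $\na(\cdot)=\na((\cdot)^{\k(\na)})$; for $(\To)$ you exhibit explicit separating $\AH$'s in each failure case, including the delicate case of an argument that is absent on one side and self-attacking on the other. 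Your approach is more self-contained and in effect re-derives the \cite{equisurvey} fact along the way (your $(\To)$ case analysis together with your congruence argument yields it), at the cost of a slightly longer proof; the paper's approach is shorter but leans on the external reference.
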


\begin{proof} In \cite{equisurvey} it was already shown that $\AF\equiv^{\na}_E\G$ iff jointly $A(\AF) = A(\G)$ and $\na(\AF)~=~\na(\G)$. Consequently, it suffices to prove that
$\AF^{k(\na)}=\G^{k(\na)}$ implies $A(\AF) = A(\G)$ as well as $\na(\AF)~=~\na(\G)$ and vice versa.

($\oT$) Given $\AF^{k(\na)}=\G^{k(\na)}$. By Definition~\ref{def:naivekernel} we immediately have $A(\AF) = A(\G)$. Assume now that $\na(\AF)\neq\na(\G)$ and without loss of generality let $S\in\na(\AF)\sm\na(\G)$. Obviously, for any AF $\H$, $\cf(\H) = \cf\left(\H^{\k(\na)}\right)$. Hence, there is an $S'$, s.t. $S\subsetneq S' \in \cf(\G)\sm\cf(\F)$. Thus, there are $a,b\in S'\sm S$, s.t. $(a,b)\in R(\F)\sm R(\G)$. Furthermore, $(a,a),(b,b)\notin R(\G)$ and since for any AF $\H$, $L(\H) = L\left(\H^{\k(\na)}\right)$ we obtain $(a,a),(b,b)\notin R(\F)$. Consequently, we have to consider $a\neq b$. Since $(a,b)\in R(\F)\sm R(\G)$, we obtain $(a,b),(b,a)\in R\left(\AF^{k(\na)}\right)$. Since $\AF^{k(\na)}=\G^{k(\na)}$ is assumed we derive $(a,b),(b,a)\in R\left(\AG^{k(\na)}\right)$. By Definition~\ref{def:naivekernel} we must have $(b,a)\in R(\G)$ contradicting the conflict-freeness of $S'$ in $\G$.

($\To$) We show the contrapositive, i.e. $\AF^{k(\na)}\neq\G^{k(\na)}$ implies $A(\AF) \neq A(\G)$ or $\na(\AF)~\neq~\na(\G)$. Observe that for any AF $\H$, $A(\H) = A\left(\H^{\k(\na)}\right)$. Consequently, if $A\left(\AF^{k(\na)}\right)\neq A\left(\G^{k(\na)}\right)$, then $A(\AF)\neq A(\G)$. Assume now $R\left(\AF^{k(\na)}\right)\neq R\left(\G^{k(\na)}\right)$. Without loss of generality let $(a,b)\in R\left(\AF^{k(\na)}\right)\sm R\left(\G^{k(\na)}\right)$. Since for any AF $\H$, $L(\H) = L\left(\H^{\k(\na)}\right)$ we obtain $a\neq b$. Furthermore, $(a,b)\in R\left(\AF^{k(\na)}\right)$ implies $\{(a,a),(a,b),(b,a),(b,b)\}\cap R\left(\AF\right) \neq \emptyset$ and consequently, for any $S\in\na(\F)$, $\{a,b\}\not\subseteq S$. Since $(a,b)\notin R\left(\G^{k(\na)}\right)$ we deduce $\{(a,a),(a,b),(b,a),(b,b)\}\cap R\left(\AF\right) = \emptyset$. Hence, $\{a,b\}\in\cf(\G)$ and thus, there exists a set $S\in\na(\G)$, s.t. $\{a,b\}\subseteq S$ (compare \cite[Lemma 3]{BauS14}) witnessing $\na(\AF)~\neq~\na(\G)$. 
\end{proof}

We turn now to \textit{strongly admissible sets} (for short, $\sad$) \cite{BaroniG07a}. 
We will show that,
beside grounded \cite{strong} and resolution based grounded semantics \cite{BaroniDG11,DvorakLOW14},
strongly admissible sets are characterizable through the grounded kernel. Consider the following self-referential definition
taken from \cite{Caminada14}.

\begin{definition} \label{def:strad} Given an AF $\AF = (A,R)$. A set $S\subseteq A$ is \textit{strongly admissible}, i.e. $S\in\strad(\AF)$ iff any $a\in S$ is defended by a strongly admissible set $S' \subseteq S\sm\{a\}$. 
\end{definition} 

The following properties are needed to prove the characterization theorem. The first two of them are already shown in \cite{BaroniG07b}. The third statement is an immediate consequence of the former.

\begin{proposition} \label{pro:strad:prop} Given two AFs $\AF$ and $\G$, then 
\begin{enumerate}
	\item $\gr(\AF)\subseteq\strad(\AF)\subseteq\ad(\AF)$,
	\item if $S\in\gr(\AF)$ we have: $S'\subseteq S$ for all $S'\in\strad(\AF)$, and
	\item $\strad(\AF) = \strad(\AG)$ implies $\gr(\AF) = \gr(\AG)$. 
\end{enumerate}
\end{proposition}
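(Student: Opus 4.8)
The plan is to prove the three statements of Proposition~\ref{pro:strad:prop} in order, since the third follows from the first two, and the first two are stated to be already established in the literature but should be sketched for completeness.

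\medskip
\noindent\textbf{Part 1: }$\gr(\AF)\subseteq\strad(\AF)\subseteq\ad(\AF)$.

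For the inclusion $\strad(\AF)\subseteq\ad(\AF)$, I would proceed by structural induction following the recursive shape of Definition~\ref{def:strad}. If $S\in\strad(\AF)$, then every $a\in S$ is defended by some strongly admissible $S'\subseteq S\sm\{a\}$; in particular $S$ defends each of its own elements, so $S$ is admissible provided it is conflict-free, and conflict-freeness propagates upward through the recursion from the base case $S'=\emptyset$. For $\gr(\AF)\subseteq\strad(\AF)$, I would use the characterization of the grounded extension as the least fixed point of the characteristic function $\Gamma_\AF(S)=\{a\mid a\text{ defended by }S\}$, built up as $\emptyset\subseteq\Gamma_\AF(\emptyset)\subseteq\Gamma_\AF^2(\emptyset)\subseteq\cdots$. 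Each newly added argument at stage $i+1$ is defended by the set accumulated through stage $i$, which is itself strongly admissible by the induction hypothesis; hence the limit is strongly admissible.

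\medskip
\noindent\textbf{Part 2: }if $S\in\gr(\AF)$ then $S'\subseteq S$ for all $S'\in\strad(\AF)$.

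Here I would again exploit that the grounded extension $S$ is the least complete extension, equivalently the least fixed point of $\Gamma_\AF$. Take any $S'\in\strad(\AF)$ and show $S'\subseteq S$ by induction on the recursive depth witnessing membership in $\strad(\AF)$: every $a\in S'$ is defended by a strongly admissible subset already contained in $S$ by the induction hypothesis, so $a$ is defended by $S$, whence $a\in\Gamma_\AF(S)=S$ since $S$ is a fixed point. Thus $\gr(\AF)=\{S\}$ dominates every strongly admissible set, and moreover $S$ is itself the $\subseteq$-maximal strongly admissible set.

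\medskip
\noindent\textbf{Part 3: }$\strad(\AF)=\strad(\AG)$ implies $\gr(\AF)=\gr(\AG)$.

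This is the easy consequence. Assuming $\strad(\AF)=\strad(\AG)$, Part~1 and Part~2 together identify $\gr(\AF)$ as the unique $\subseteq$-maximal element of $\strad(\AF)$: it is strongly admissible (Part~1) and contains every strongly admissible set (Part~2), so it is determined purely by the collection $\strad(\AF)$. Since $\strad(\AF)=\strad(\AG)$, the maximal elements coincide, giving $\gr(\AF)=\gr(\AG)$.

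\medskip
The main obstacle is getting the inductions in Parts~1 and~2 cleanly grounded, since Definition~\ref{def:strad} is self-referential; the induction must be on a well-founded measure such as the least recursion depth at which an argument's defense is witnessed, and I would make that measure explicit rather than appeal informally to ``the recursion.'' Once the least-fixed-point characterization of $\gr$ is aligned with this depth measure, Parts~1 and~2 fall out together and Part~3 is immediate.
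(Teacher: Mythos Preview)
Your proposal is correct and matches the paper's treatment: the paper simply cites \cite{BaroniG07b} for Parts~1 and~2 and states that Part~3 is an immediate consequence of them, which is exactly the logical structure you follow. Your sketches for Parts~1 and~2 (induction on recursion depth for $\strad\subseteq\ad$, and the least-fixed-point build-up of $\Gamma_\AF$ for $\gr\subseteq\strad$ and for Part~2) are the standard arguments behind that citation, so you have supplied the details the paper omits rather than taken a different route.
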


The following definition provides us with an alternative criterion for being a strong admissible set. In contrast to the former it allows one to construct strong admissible sets step by step. Thus, a construction method is given.  

\begin{definition} \label{def:stradnew} Given an AF $\AF = (A,R)$. A set $S\subseteq A$ is \textit{strongly admissible}, i.e. $S\in\strad(\AF)$ iff there are finitely many and pairwise disjoint sets $A_1,...,A_n$, s.t. $S = \bigcup_{1\leq i \leq n} A_i$ and $A_1\subseteq\Gamma_{\AF}(\emptyset)$\footnote{Hereby, $\Gamma$ is the so-called \textit{characteristic function} \cite{dung95} with $\Gamma_{\AF}(S) = \{a\in A\mid a \text{ is defended by } S \text{ in } \AF\}$. The term $\Gamma_{\AF}(\emptyset)$ can be equivalently replaced by $\{a\in A\mid a \text{ is unattacked} \}$.}  and furthermore, $\bigcup_{1\leq i \leq j} A_i \text{ defends } A_{j+1} \text{ for } 1\leq j \leq n-1$.

\end{definition} 

\begin{proposition} \label{pro:stradnew} Definitions~\ref{def:strad} and \ref{def:stradnew} are equivalent.
\end{proposition}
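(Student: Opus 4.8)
The plan is to prove the two implications separately, reading Definition~\ref{def:strad} as the recursive characterization and Definition~\ref{def:stradnew} as the layered one. Throughout I will use the fact recorded in the footnote that $\Gamma_{\AF}(\emptyset)$ is exactly the set of unattacked arguments, equivalently the arguments defended by $\emptyset$ (since $\emptyset$ counterattacks nothing).

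For the direction from Definition~\ref{def:stradnew} to Definition~\ref{def:strad} I would argue by induction on the number $n$ of layers. Given a decomposition $S=\bigcup_{1\le i\le n}A_i$ and an argument $a\in S$, let $k$ be the unique index (unique by disjointness) with $a\in A_k$. If $k=1$, then $a\in A_1\subseteq\Gamma_{\AF}(\emptyset)$ is unattacked, hence defended by $\emptyset$, which is strongly admissible vacuously and satisfies $\emptyset\subseteq S\sm\{a\}$. If $k\ge 2$, put $T=\bigcup_{1\le i\le k-1}A_i$; the defense condition of Definition~\ref{def:stradnew} gives that $T$ defends $A_k$ and in particular $a$. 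The prefix $A_1,\dots,A_{k-1}$ is itself a valid layered decomposition of $T$ of length $k-1<n$, so by the induction hypothesis $T\in\strad(\AF)$ in the sense of Definition~\ref{def:strad}, and disjointness yields $a\notin T$, i.e.\ $T\subseteq S\sm\{a\}$. Either way $a$ is defended by a strongly admissible subset of $S\sm\{a\}$, which is exactly what Definition~\ref{def:strad} demands.

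For the converse I would produce the layers explicitly by iterating defense inside $S$. Set $B_0=\emptyset$ and $B_{k+1}=\{a\in S\mid a\text{ is defended by }B_k\text{ in }\AF\}$. Monotonicity of defense in the defending set gives $B_k\subseteq B_{k+1}$, so by finiteness of $\AF$ the chain stabilizes at some $B_\infty=\bigcup_k B_k$. Defining $A_j=B_j\sm B_{j-1}$ produces pairwise disjoint sets with $A_1=B_1=S\cap\Gamma_{\AF}(\emptyset)\subseteq\Gamma_{\AF}(\emptyset)$, and $\bigcup_{1\le i\le j}A_i=B_j$ defends $A_{j+1}$ by construction of $B_{j+1}$; hence these layers witness Definition~\ref{def:stradnew} \emph{provided} $B_\infty=S$.

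The main obstacle is precisely showing this covering property $B_\infty=S$, and here I would use the recursive structure of Definition~\ref{def:strad} together with finiteness. Suppose $U=S\sm B_\infty\neq\emptyset$ and pick $a_0\in U$; by Definition~\ref{def:strad} it has a strongly admissible defender $D_0\subseteq S\sm\{a_0\}$. If $D_0\subseteq B_\infty$ then $a_0$ would be defended by $B_\infty$, hence lie in the fixpoint $B_\infty$, contradicting $a_0\in U$. Otherwise $D_0\cap U\neq\emptyset$, and choosing $a_1\in D_0\cap U$, strong admissibility of $D_0$ supplies a defender $D_1\subseteq D_0\sm\{a_1\}$, so $\card{D_1}<\card{D_0}$. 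Iterating yields a strictly decreasing sequence of finite defender sets $D_0\supsetneq D_1\supsetneq\cdots$, which cannot continue forever; the only way to terminate is to reach a $D_k\subseteq B_\infty$, giving the contradiction above. Thus $U=\emptyset$, so $B_\infty=S$ and the layers $A_1,\dots,A_n$ (up to the first index with $B_n=S$) complete the construction, establishing the equivalence.
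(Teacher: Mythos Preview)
Your proof is correct and follows essentially the same route as the paper: both directions match the paper's argument---induction on the number of layers for Definition~\ref{def:stradnew} $\Rightarrow$ Definition~\ref{def:strad}, and for the converse the same iterated-defense construction $B_{k+1}=\{a\in S\mid a$ defended by $B_k\}$ (the paper writes the increments $S_k$ directly, but these coincide with your $A_k=B_k\sm B_{k-1}$), followed by a strictly decreasing chain of strongly admissible defender sets to force $B_\infty=S$. Your presentation of the descending-chain contradiction is in fact a bit more explicit than the paper's sketch, but the underlying idea is identical.
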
 

\begin{proof} For the proof we use $S\in\strad_{k}(\AF)$ as a shorthand for $S\in\strad(\AF)$ in the sense of Definition~$k$.
$(\oT)$ Given $S\in\strad_{\ref{def:stradnew}}(\AF)$. Hence, there is a finite partition, s.t. $S = \bigcup_{1\leq i \leq n} A_i$, $A_1\subseteq\Gamma_{\AF}(\emptyset)$ and $\bigcup_{1\leq i \leq j} A_i \text{ defends } A_{j+1} \text{ for } 1\leq j \leq n-1$. Observe that $\bigcup_{1\leq i \leq j} A_i \in \strad_{\ref{def:stradnew}}(\AF)$ for any $j\leq n$. Let $a\in S$. Consequently, there is an index $i^*$, s.t. $a\in A_{i^*}$. Furthermore, since $\bigcup_{1\leq i \leq i^*-1} A_i \text{ defends } A_{i^*}$ by definition, we deduce that $\bigcup_{1\leq i \leq i^*-1} A_i\subseteq S\sm\{a\}$ defends $a$. We have to show now that (the smaller set w.r.t.\ $\subseteq$) $\bigcup_{1\leq i \leq i^*-1} A_i\in\strad_{\ref{def:strad}}(\AF)$. Note that $\bigcup_{1\leq i \leq i^*-1} A_i\in\strad_{\ref{def:stradnew}}(\AF)$.  Since we are dealing with finite AFs we may iterate our construction. Hence, no matter which elements are chosen we end up with a $\subseteq$-chain, s.t. $\emptyset\subseteq \bigcup_{1\leq i \leq i_e} A_i \subseteq S_e\sm{a_e} $ and $\emptyset$ defends $a_{e}$ for some index $i_e$, set $S_e$ and element $a_e$. This means, the question whether $S\in\strad_{\ref{def:strad}}(\AF)$ can be decided positively by proving  $\emptyset\in\strad_{\ref{def:strad}}(\AF)$. Since the empty set does not contain any elements we find $\emptyset\in\strad_{\ref{def:strad}}(\AF)$ concluding $\strad_{\ref{def:stradnew}}\subseteq\strad_{\ref{def:strad}}$.\\
$(\To)$ Given $S\in\strad_{\ref{def:strad}}(\AF)$, consider the following sets~$S_i$: $S_1 = \left(\Gamma(\emptyset)\sm \emptyset\right) \cap S$, $S_2 = \left(\Gamma(S_1)\sm S_1\right) \cap S$, $S_3 = \left(\Gamma(\bigcup_{i=1}^2 S_i)\sm \bigcup_{i=1}^2 S_i\right) \cap S$, \dots , $S_n = \left(\Gamma(\bigcup_{i=1}^{n-1} S_{i})\sm \bigcup_{i=1}^{n-1} S_{i}\right) \cap S$. Since we are dealing with finite AFs there has to be a natural $n\in\N$, s.t. $S_n = S_{n+1} = S_{n+2} = \dots$. Consider now the union of these sets, i.e. $\bigcup_{i=1}^{n} S_{i}$. We show now that $\bigcup_{i=1}^{n} S_{i}\in\strad_{\ref{def:stradnew}}(\AF)$ and $\bigcup_{i=1}^{n} S_{i} = S$. By construction we have $S_1 \subseteq \Gamma(\emptyset)$. Moreover, $\bigcup_{1\leq i \leq j} S_i \text{ defends } S_{j+1} \text{ for } 1\leq j \leq n-1$. This can be seen as follows. By definition $S_{j+1} = \left(\Gamma(\bigcup_{i=1}^{j} S_{i})\sm \bigcup_{i=1}^{j} S_{i}\right) \cap S$. This means, $S_{j+1} \subseteq \Gamma(\bigcup_{i=1}^{j} S_{i})$. Since $\Gamma(\bigcup_{i=1}^{j} S_{i})$ contains all elements defended by $\bigcup_{i=1}^{j} S_{i}$ we obtain $\bigcup_{i=1}^{n} S_{i}\in\strad_{\ref{def:stradnew}}(\AF)$. Obviously, $\bigcup_{i=1}^{j} S_{i} \subseteq S$. In order to derive a contradiction we suppose $S\not\subseteq \bigcup_{i=1}^{n} S_{i}$. This means there is a nonempty set $S^*$, s.t. $S = S^* \cup \bigcup_{i=1}^{n} S_{i}$. Let $S^* = \{s_1,\dots,s_k\}$. Observe that no element $s_i$ is defended by $\bigcup_{i=1}^{n} S_{i}$ (*). Since $S\in\strad_{\ref{def:strad}}(\AF)$ we obtain a set $S^*_1 \subseteq S\sm\{s_1\}$, s.t. $S^*_1\in\strad_{\ref{def:strad}}(\AF)$ and $S^*_1$ defends $s_1$. We now iterate this procedure ending up with a set $S^*_k \subseteq S^*_{k-1}\sm\{s_k\}\subseteq \bigcup_{i=1}^{n} S_{i}$, s.t. $S^*_k\in\strad_{\ref{def:strad}}(\AF)$ and $S^*_k$ defends $s_k$ contradicting (*) and concluding the proof. 
\end{proof}
The following example shows how to use the new construction method. 

\begin{example} Consider the following AF $\AF$.
\begin{center}
\begin{tikzpicture}
	\node (a) at (-0.6,-0.6) [minimum size = 7mm, circle, thick, draw, label = left:$\AF:$]{$a$};
    \node (b) at (0.6,-0.3) [minimum size = 7mm, circle, thick, draw]{$b$};
    \node (c) at (1.8,0) [minimum size = 7mm, circle, thick, draw] {$c$};
		\node (d) at (1.8,-1.2) [minimum size = 7mm, circle, thick, draw] {$d$};
    \node (e) at (3.0,-0.6) [minimum size = 7mm, circle, thick, draw] {$e$};
		\node (f) at (4.2,-0.6) [minimum size = 7mm, circle, thick, draw] {$f$};

\draw[->,thick] (e) to [thick,loop,distance=0.5cm] (e);
\draw[->,thick] (a) to [thick, bend right] (b);
\draw[->,thick] (b) to [thick, bend left] (c);
\draw[->,thick] (c) to [thick, bend right] (e);
\draw[->,thick] (e) to [thick, bend right] (f);
\draw[->,thick] (f) to [thick, bend right] (e);
\draw[->,thick] (d) to [thick, bend right] (e);

\end{tikzpicture}
\end{center}
We have $\Gamma_{\AF}(\emptyset) = \{a,d\}$. Hence, for all $S\subseteq\{a,d\}$, $S\in\sad(\AF)$. Furthermore, $\Gamma_{\AF}(\{a\}) = \{a,c\}$, $\Gamma_{\AF}(\{d\}) = \{d,f\}$ and $\Gamma_{\AF}(\{a,d\}) = \{a,d,c,f\}$. This means, additionally $\{a,c\},\{d,f\},\{a,d,c\},\{a,d,f\},\{a,d,c,f\}\in\sad(\AF)$. Finally, $\Gamma_{\AF}(\{a,c\}) = \{a,c,f\}$ justifying the last missing set $\{a,c,f\}\in\sad(\AF)$. 
\end{example}

The following corollary is an immediate consequence of Definition~\ref{def:stradnew}. It is essential to prove the characterization theorem for strongly admissible sets.

\begin{corollary} \label{cor:strad}
Given an AF $\AF$ and two sets $B,B'\subseteq A(F)$. If $B$ defends $B'$, then $B\cup B'$ is strong admissible if $B$ is. 
\end{corollary}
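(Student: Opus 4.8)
The plan is to prove Corollary~\ref{cor:strad} by appealing directly to the constructive characterization of strong admissibility given in Definition~\ref{def:stradnew}, which by Proposition~\ref{pro:stradnew} is equivalent to the original self-referential Definition~\ref{def:strad}. The claim states that if $B$ defends $B'$ and $B$ is strongly admissible, then $B\cup B'$ is strongly admissible. Since $B\in\strad(\AF)$, Definition~\ref{def:stradnew} guarantees a finite partition $B = \bigcup_{1\leq i \leq n} A_i$ with $A_1\subseteq\Gamma_\AF(\emptyset)$ and with $\bigcup_{1\leq i \leq j} A_i$ defending $A_{j+1}$ for each $1\leq j \leq n-1$. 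My strategy is to extend this partition by appending $B'$ as a new block.

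First I would set $A_{n+1} \eqdef B'\sm B$, so that the sets $A_1,\dots,A_n,A_{n+1}$ remain pairwise disjoint and their union is exactly $B\cup B'$. The two defining conditions of Definition~\ref{def:stradnew} then need to be checked for this extended sequence. The condition $A_1\subseteq\Gamma_\AF(\emptyset)$ is inherited unchanged from the partition of $B$. The chain-of-defense condition $\bigcup_{1\leq i \leq j} A_i \text{ defends } A_{j+1}$ already holds for all $1\leq j\leq n-1$, so the only new obligation is the case $j=n$, namely that $\bigcup_{1\leq i \leq n} A_i = B$ defends $A_{n+1} = B'\sm B$. This is immediate from the hypothesis that $B$ defends $B'$: defending the larger set $B'$ certainly entails defending its subset $B'\sm B$.

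The main subtlety to handle carefully is the overlap between $B$ and $B'$, which is why I take $A_{n+1} = B'\sm B$ rather than $B'$ itself; this keeps the blocks disjoint as required while still producing the correct union $B\cup B'$. One edge case worth a remark is when $B'\sm B = \emptyset$, i.e. $B'\subseteq B$; then no new block is added, the partition of $B$ already witnesses $B\cup B' = B \in\strad(\AF)$, and there is nothing to prove. I do not anticipate a genuine obstacle here: the whole point of introducing the constructive Definition~\ref{def:stradnew} was precisely to make such ``append a defended layer'' arguments routine, and this corollary is essentially a one-step instance of that construction. The bookkeeping around disjointness and the $j=n$ defense condition is the only thing requiring attention, and both follow directly from the hypotheses.
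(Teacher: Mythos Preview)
Your proof is correct and follows precisely the route the paper intends: the paper states the corollary as ``an immediate consequence of Definition~\ref{def:stradnew}'' without further argument, and your appending of $A_{n+1} = B'\sm B$ to the witnessing partition of $B$ is exactly how that immediate consequence unfolds. The bookkeeping around disjointness and the $j=n$ defense step is handled correctly.
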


The following lemma shows that the grounded kernel is insensitive w.r.t.\ strong admissible sets.

\begin{lemma} \label{lem:strad} For any AF $\AF$,  $\strad\left(\AF\right) = \strad\left(\AF^{k(\gr)}\right)$.
\end{lemma}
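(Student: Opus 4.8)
The plan is to exploit two structural facts about the grounded kernel and then lift them to strongly admissible sets through the layered characterisation of Definition~\ref{def:stradnew}. First I would record exactly what $R^{k(\gr)}$ deletes: every removed pair $(a,b)$ satisfies $a \neq b$ and $(b,b) \in R$, so it always points \emph{into} a self-loop. This yields three bookkeeping observations: (i) no self-loop is ever deleted, hence $L(\AF) = L(\AF^{k(\gr)})$; (ii) for every $a \notin L(\AF)$ the set of attackers of $a$ coincides in $\AF$ and $\AF^{k(\gr)}$; and (iii) trivially $R^{k(\gr)} \subseteq R$. Recall also that by Proposition~\ref{pro:strad:prop} strongly admissible sets are admissible, hence conflict-free, so every element occurring in a set of $\strad(\AF)$ or $\strad(\AF^{k(\gr)})$ avoids $L(\AF)$; together with (i) this lets me apply (ii) to all relevant arguments.

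Next I would isolate two defense-transfer lemmas for an argument $a \notin L(\AF)$. Lemma~A: for \emph{any} set $T$, if $T$ defends $a$ in $\AF^{k(\gr)}$ then $T$ defends $a$ in $\AF$. This is immediate, since by (ii) the attackers of $a$ are the same in both frameworks and any counterattack witnessed in the kernel survives in $\AF$ by (iii). Lemma~B is the converse under the stronger hypothesis that $T \in \strad(\AF)$: if $T$ defends $a$ in $\AF$ then $T$ defends $a$ in $\AF^{k(\gr)}$. Here the only danger is a self-loop attacker $b$ of $a$ all of whose counterattacks $(c,b)$ with $c \in T$ get deleted; by the kernel definition (and since each such $c$ is not a self-loop) this happens exactly when $b$ attacks every such $c$.

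To rule out this danger I would fix a layered decomposition $T = B_1 \cup \dots \cup B_m$ as in Definition~\ref{def:stradnew} and argue by minimality. Assume toward a contradiction that every $c \in T$ with $c \attacks b$ also satisfies $b \attacks c$, and choose such a $c$ in the earliest layer $B_k$. Since $b \attacks c$, the argument $c$ is attacked, so $c \notin B_1$ and $k \geq 2$; but $\bigcup_{i \le k-1} B_i$ defends $c$, so it must counterattack $b$, producing a counterattacker of $b$ inside $T$ in a strictly earlier layer and contradicting minimality. Hence some $c \in T$ attacks $b$ with $b$ not attacking $c$, and this attack is retained in $\AF^{k(\gr)}$; combined with the non-self-loop attackers of $a$, whose counterattacks are never deleted by (i)–(ii), this shows $T$ defends $a$ in the kernel. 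I expect this well-foundedness / minimal-layer step to be the only real obstacle, since it is the one place where the extra counterattacks present in $\AF$ but absent in the kernel could break the argument.

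Finally I would assemble the equivalence from the layered characterisation, reusing the \emph{same} decomposition in both directions. For $\strad(\AF) \subseteq \strad(\AF^{k(\gr)})$, take $S \in \strad(\AF)$ with decomposition $A_1,\dots,A_n$; each prefix $\bigcup_{i \le j} A_i$ is itself strongly admissible and defends $A_{j+1}$ in $\AF$, so Lemma~B turns the defense of each layer into a defense in $\AF^{k(\gr)}$, while $A_1 \subseteq \Gamma_\AF(\emptyset) \subseteq \Gamma_{\AF^{k(\gr)}}(\emptyset)$ follows from (ii). For the reverse inclusion, take a decomposition of $S$ in $\AF^{k(\gr)}$ and apply Lemma~A layer by layer (and to the base case $A_1$ via (ii)) to transfer every defense step back to $\AF$. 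In both directions the conflict-freeness of the layers is inherited from $\strad \subseteq \ad$, so the same $A_1,\dots,A_n$ witnesses membership in the other framework, giving $\strad(\AF) = \strad(\AF^{k(\gr)})$.
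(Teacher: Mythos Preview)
Your proof is correct and follows the same overall architecture as the paper: both directions are argued via the layered characterisation of Definition~\ref{def:stradnew}, the easy inclusion $\strad(\AF^{k(\gr)}) \subseteq \strad(\AF)$ goes through essentially identically, and the base case in both relies on $\Gamma_\AF(\emptyset) = \Gamma_{\AF^{k(\gr)}}(\emptyset)$.

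The one genuine difference lies in the key step of the hard inclusion $\strad(\AF) \subseteq \strad(\AF^{k(\gr)})$, namely showing that a prefix $T = \bigcup_{i\le k} A_i$ which defends $A_{k+1}$ in $\AF$ still defends it in the kernel. The paper proceeds by induction on the number of layers: it picks \emph{one} counterattacker $a \in T$ of the problematic self-loop $b$, observes that the deletion of $(a,b)$ forces $(b,a) \in R$, notes that $(b,a)$ survives in the kernel, and then invokes the induction hypothesis (which already gives $T \in \strad(\AF^{k(\gr)})$, hence $T$ admissible in the kernel) to produce a kernel counterattack on $b$. You instead run a minimal-layer argument entirely inside $\AF$: assuming every counterattacker of $b$ in $T$ is attacked back by $b$, you pick one in the earliest layer and use the defense of that layer to descend, obtaining a counterattacker $c$ with $(b,c) \notin R$, so that $(c,b)$ is retained in the kernel directly. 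Your route avoids appealing to admissibility in the kernel and to Proposition~\ref{pro:strad:prop} at that point, at the cost of an explicit well-foundedness step; the paper's route is a shorter contradiction but leans on the inductive structure and the $\strad \subseteq \ad$ inclusion. Both are clean; the two arguments are really dual unwindings of the same phenomenon.
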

\begin{proof} The grounded kernel is node- and loop-preserving, i.e.\ $A(\AF) = A\left(\AF^{k(\gr)}\right)$ and $L(\AF)~=~L\left(\AF^{k(\gr)}\right)$. Furthermore, $\cf(\AF)\! =\! \cf\left(\AF^{k(\gr)}\right)$ and $\Gamma_{\AF}(\emptyset)~=~\Gamma_{\AF^{k(\gr)}}(\emptyset)$ as shown in \cite[Lemma~6]{strong}.\\
($\subseteq$) Given $S\in\strad\left(\F\right)$. The proof is by induction on $n$ indicating the number of  sets forming a suitable (according to Definition~\ref{def:stradnew}) partition of $S$. Let $n=1$. In consideration of the grounded kernel we observe $\Gamma_{\F}(\emptyset) = \Gamma_{\F^{k(\gr)}}(\emptyset)$, i.e. the set of unattacked arguments does not change. Since $S\subseteq\Gamma_{\F}(\emptyset)$ is assumed we are done. Assume now that the assertion is proven for any $k$-partition. Let $S$ be a $(k+1)$-partition, i.e. $S = \bigcup_{i=1}^{k+1} A_i$. According to induction hypothesis as well as Corollary~\ref{cor:strad} it suffices to prove $\bigcup_{i=1}^k A_i$ defends $A_{k+1}$ in $\F^{k(\gr)}$. Assume not, i.e. there are arguments $b\in A(\F)\sm S$, $c\in A_{k+1}$ s.t. $(b,c)\in R\left(\F^{k(\gr)}\right) \subseteq R(\AF)$ and for all $a\in \bigcup_{i=1}^k A_i$, $(a,b)\notin R\left(\F^{k(\gr)}\right)$ (*). Since $\bigcup_{i=1}^k A_i$ defends $A_{k+1}$ in $\F$ we deduce the existence of an argument $a\in\bigcup_{i=1}^k A_i$
 s.t. $(a,b)\in R\left(\F\right)$. Thus, $(a,b)$ is redundant w.r.t.\ the grounded kernel. According to
 Definition~\ref{def:kernel} and due to the conflict-freeness of $\bigcup_{i=1}^k A_i$ we have $(a,a)\notin R\left(\F\right)$ and $(b,a),(b,b)\in R\left(\F\right)$. Consequently, $(b,a)\in\F^{k(\gr)}$.
 Since $\bigcup_{i=1}^k A_i$ is a strong admissible $k$-partition in $\F$ we obtain by induction hypothesis that $\bigcup_{i=1}^k A_i$ is strong admissible in $\F^{k(\gr)}$ and therefore, admissible in $\F^{k(\gr)}$ (Proposition~\ref{pro:strad:prop}). Hence there has to be an argument $a\in \bigcup_{i=1}^k A_i$, s.t.\ $(a,b)\in R\left(\F^{k(\gr)}\right)$, contradicting~(*).

\noindent
($\supseteq$) Assume $S\in\strad\left(\F^{k(\gr)}\right)$. We show $S\in\strad\left(\F\right)$ by induction on $n$ indicating that $S$ is a $n$-partition in $\F^{k(\gr)}$. Due to $\Gamma_{\F}(\emptyset) = \Gamma_{\F^{k(\gr)}}(\emptyset)$ the base case is immediately clear. For the induction step let $S$ be a $(k+1)$-partition, i.e. $S = \bigcup_{i=1}^{k+1} A_i$. By induction hypothesis we may assume that $\bigcup_{i=1}^k A_i$ is strongly admissible in $\F$. Using Corollary~\ref{cor:strad} it suffices to prove $\bigcup_{i=1}^k A_i$ defends $A_{k+1}$ in $\F$. Assume not, i.e. there are arguments $b\in A(\F)\sm S$, $c\in A_{k+1}$ s.t. $(b,c)\in R\left(\F\right)$ and for all $a\in \bigcup_{i=1}^k A_i$, $(a,b)\notin R\left(\F\right)$. We even have $(a,b)\notin R\left(\F^{k(\gr)}\right)$ since $R\left(\F^{k(\gr)}\right)\subseteq R\left(\F\right)$. Consequently, $(b,c)$ has to be deleted in $\F^{k(\gr)}$. Definition~\ref{def:kernel} requires $(c,c)\in R\left(\F^{k(\gr)}\right)$ contradicting the conflict-freeness of $S$ in $\F^{k(\gr)}$.
\end{proof}

\begin{theorem} For any two AFs $\F$ and $\G$ we have,
$$\F\equiv^{\strad}_{E} \G \ToT \F^{k(\gr)} = \G^{k(\gr)}$$
\end{theorem}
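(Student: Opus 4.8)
The plan is to reduce the claim to the already-established grounded kernel characterization in Theorem~\ref{the:strong}(1), using Lemma~\ref{lem:strad} and Proposition~\ref{pro:strad:prop}(3) as the bridges between strong admissibility and grounded semantics. Throughout I use that the union of AFs is associative and commutative, together with the elementary fact that expansion equivalence lifts through unions: if $\F\equiv^{\sigma}_E\G$ then $\F\cup\H_0\equiv^{\sigma}_E\G\cup\H_0$ for every fixed $\H_0$, since for all $\H$ we have $(\F\cup\H_0)\cup\H = \F\cup(\H_0\cup\H)\equiv^{\sigma}\G\cup(\H_0\cup\H)=(\G\cup\H_0)\cup\H$ by the definition of $\equiv^{\sigma}_E$ applied to the witness $\H_0\cup\H$.

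For the direction ($\To$) I would assume $\F\equiv^{\strad}_E\G$, i.e.\ $\strad(\F\cup\H)=\strad(\G\cup\H)$ for every AF $\H$. Applying Proposition~\ref{pro:strad:prop}(3) to each pair $\F\cup\H$, $\G\cup\H$ yields $\gr(\F\cup\H)=\gr(\G\cup\H)$ for all $\H$, which is exactly $\F\equiv^{\gr}_E\G$. Theorem~\ref{the:strong}(1) then delivers $\F^{k(\gr)}=\G^{k(\gr)}$.

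For the direction ($\oT$) I would assume $\F^{k(\gr)}=\G^{k(\gr)}$. By Theorem~\ref{the:strong}(1) this gives $\F\equiv^{\gr}_E\G$, and by the lifting observation above $\F\cup\H\equiv^{\gr}_E\G\cup\H$ for every $\H$; a second application of Theorem~\ref{the:strong}(1) then gives $(\F\cup\H)^{k(\gr)}=(\G\cup\H)^{k(\gr)}$ for all $\H$. Instantiating Lemma~\ref{lem:strad} at $\F\cup\H$ and at $\G\cup\H$ closes the loop:
\[
\strad(\F\cup\H)=\strad\!\left((\F\cup\H)^{k(\gr)}\right)=\strad\!\left((\G\cup\H)^{k(\gr)}\right)=\strad(\G\cup\H),
\]
for every $\H$, that is, $\F\equiv^{\strad}_E\G$.

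The conceptually delicate step, and the one I expect to be the main obstacle, is the ($\oT$) direction: propagating the kernel identity $\F^{k(\gr)}=\G^{k(\gr)}$ to all expansions $(\F\cup\H)^{k(\gr)}=(\G\cup\H)^{k(\gr)}$. The grounded kernel deletes an edge $(a,b)$ only in the presence of a self-loop on $b$ together with a self-loop on $a$ or a back-edge $(b,a)$, so a naive edge-by-edge argument is fragile, since adding $\H$ can both create and destroy exactly the side-conditions that govern deletion. The clean way around this is to avoid reasoning about kernels directly and instead route through the semantic characterization of Theorem~\ref{the:strong}(1) twice, exploiting that $\equiv^{\gr}_E$ is by definition closed under further unions. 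All remaining steps are then immediate consequences of Lemma~\ref{lem:strad} and Proposition~\ref{pro:strad:prop}(3).
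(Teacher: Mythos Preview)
Your proof is correct and follows essentially the same approach as the paper: both directions rely on Proposition~\ref{pro:strad:prop}(3) together with Theorem~\ref{the:strong}(1) for $(\To)$, and on propagating the kernel identity to $(\F\cup\H)^{k(\gr)}=(\G\cup\H)^{k(\gr)}$ followed by Lemma~\ref{lem:strad} for $(\oT)$. The only cosmetic difference is that the paper invokes the congruence of the grounded kernel under $\cup$ directly (a fact established in \cite{strong}), whereas you derive it by applying Theorem~\ref{the:strong}(1) twice via your lifting observation; both routes are valid and yield the same argument.
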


\begin{proof} ($\To$) We show the contrapositive, i.e. $\F^{k(\gr)}~\neq~\G^{k(\gr)} \To \F\not\equiv^{\strad}_{E} \G$. Assuming $\F^{k(\gr)} \neq \G^{k(\gr)}$ implies $\F\not\equiv^{\gr}_{E} \G$ (Theorem~\ref{the:strong}). This means, there is an AF $\H$, s.t. $\gr(F\cup H) \neq \gr(G\cup H)$.
Due to statement 3 of Proposition~\ref{pro:strad:prop}, we deduce $\strad(F\cup H) \neq \strad(G\cup H)$ proving $\F\not\equiv^{\strad}_{E} \G$.

\noindent
($\oT$) Given $\F^{k(\gr)} = \G^{k(\gr)}$. Since expansion equivalence is a congruence w.r.t.\ $\cup$ we obtain $\left(\F\cup\H\right)^{k(\gr)} = \left(\G\cup\H\right)^{k(\gr)}$ for any AF $\H$. Consequently, $\strad\left(\left(\F\cup\H\right)^{k(\gr)}\right) = \strad\left(\left(\G\cup\H\right)^{k(\gr)}\right)$. Due to Lemma~\ref{lem:strad} we deduce $\strad(\F\cup\H) = \strad(\G\cup\H)$, concluding the proof. 
\end{proof}

\section{Verifiability}
\label{sec:verify}
 
In this section we study the question whether we really need
the entire AF $\F$ to compute the extensions of a given semantics.
Let us consider naive semantics.
Obviously, in order to determine naive extensions it suffices
to %
know
all conflict-free sets.
Conversely, knowing $\cf(\F)$ only does not allow to reconstruct $\F$ unambiguously.
This means, knowledge about $\cf(\F)$ is indeed less information than the entire AF by itself.
In fact, most of the existing semantics do not need information of the entire framework.
We will categorize the amount of information by taking the conflict-free sets as a basis
and distinguish between different amounts of knowledge about the neighborhood,
that is range and anti-range,
of these sets.

\begin{definition}
\label{def:classes}
We call a function $\r^x : \powerset{\U}\times \powerset{\U} \to \left(\powerset{\U}\right)^n$
($n>0$)
which is expressible via basic set operations only
\emph{neighborhood function}.
A neighborhood function $\r^x$ induces the \emph{verification class}
mapping each AF $\F$ to
\[
\widetilde{\F}^x = \{(S,\r^x(S^+_F,S^-_F)) \mid S \in \cf(\F)\}.
\]
\end{definition}

We coined the term neighborhood function because the induced verification classes apply these functions to the neighborhoods, i.e. range and anti-range of conflict-free sets. The notion of {\it expressible via basic set operations} simply means that (in case of $n = 1$) the expression $\r^x(A,B)$ is in the language generated by the following BNF: $$X ::= A\mid B \mid (X\cup X) \mid (X\cap X) \mid (X\setminus X).$$
Consequently, in case of $n=1$, we may distinguish eight
set theoretically different neighborhood functions, namely
\begin{align*} %
\r^\epsilon(S,S') &= \emptyset \\ %
\r^{+}(S,S') &= S \\ %
\r^{-}(S,S') &= S' \\ %
\r^{\mp}(S,S') &= S' \sm S \\ %
\r^{\pm}(S,S') &= S \sm S' \\ %
\r^{\cap}(S,S') &= S \cap S' \\
\r^{\cup}(S,S') &= S \cup S' \\
\r^{\Delta}(S,S') &= (S \cup S') \sm (S \cap S')
\end{align*}

A verification class encapsulates a certain amount of information about an AF, as the following example illustrates.

\begin{example}
\label{ex:verification_class}
Consider the following AF $\F$:
\begin{center}
\begin{tikzpicture}
	\node (a) at (-0.6,0) [circle, thick, draw, label = left:$\F:$]{$a$};
    \node (b) at (1,0) [circle, thick, draw]{$b$};

    \node (c) at (2.6,0) [circle, thick, draw] {$c$};

\draw[->,thick] (b) to [thick,loop,distance=0.5cm] (b);

\draw[->,thick] (b) to [thick, bend right] (a);
\draw[->,thick] (a) to [thick, bend right] (b);
\draw[->,thick] (c) to [thick, bend right] (b);
\end{tikzpicture}
\end{center}
Now take, for instance, the verification class induced by $\r^+$, that is
$\widetilde{\F}^+ = \{(S,\r^+(S^+_F,S^-_F))\mid S\in\cf(\F)\} = \{(S,S^+_F)\mid S\in\cf(\F)\}$,
storing information about conflict-free sets together with their associated ranges w.r.t.\ $\F$.
It contains the following tuples:
$(\emptyset,\emptyset)$, $(\{a\},\{b\})$, $(\{c\},\{b\})$, and $(\{a,c\},\{b\})$.
The verification class induced by $\r^\pm$ contains the same tuples %
but
$(\{a\},\emptyset)$ instead of $(\{a\},\{b\})$.
\end{example}

 Intuitively, it should be clear that the set $\widetilde{\F}^+$ suffices to compute stage extensions (i.e., range-maximal conflict-free sets) of~$\F$. This intuitive understanding of \textit{verifiability} will be formally specified in Definition~\ref{def:verifiable}.  
Note that a neighborhood function $\r^x$ may return $n$-tuples. Consequently, in consideration of the eight listed basic function we obtain (modulo reordering, duplicates, empty set) $2^7+1$ syntactically different neighborhood functions and therefore the same number of verification classes. As usual, we will denote the $n$-ary combination of basic functions $(\r^{x_1}(S,S'), \dots, \r^{x_n}(S,S'))$ as $\r^x(S,S')$ with $x=x_1\dots x_n$.

With the following definition we can put neighborhood functions into relation w.r.t.\ their information.
This will help us to show that actually many of the induced classes collapse to the same amount of information.

\begin{definition}
\label{def:informative}
Given neighborhood functions $\r^x$ and $\r^y$ returning $n$-tuples and $m$-tuples, respectively,
we say that $\r^x$ is \textit{more informative} than $\r^y$,
for short $\r^x \succeq \r^y$,
iff there is a function
$\delta : \left(\powerset{\U}\right)^n \to \left(\powerset{\U}\right)^m$
such that for any two sets of arguments $S,S' \subseteq \U$,
we have $\delta\left(\r^x(S,S')\right) = \r^y\left(S,S'\right)$.
\end{definition}

We will denote the strict part of $\succeq$ by $\succ$,
i.e.\ $\r^x \succ \r^y$ iff $\r^x \succeq \r^y$ and $\r^y \not\succeq \r^x$.
Moreover $\r^x \approx \r^y$ in case $\r^x \succeq \r^y$ and $\r^y \succeq \r^x$,
we say that $\r^x$ \emph{represents} $\r^y$ and vice versa.

\begin{figure}[t]
\centering
\begin{tikzpicture}
\matrix (a) [matrix of math nodes, column sep=0.41cm, row sep=0.5cm]{
& & & +- \\
+\pm & +\mp & \pm\mp & & \cap\cup & -\pm & -\mp \\[0.5cm]
+ & \pm & \cap & \Delta & \cup & \mp & - \\
& & & \epsilon \\};

\foreach \i/\j in {2-1/1-4, 2-2/1-4,  2-3/1-4, 2-5/1-4, 2-6/1-4, 2-7/1-4,
                   3-1/2-1, 3-1/2-2,
                   3-2/2-1, 3-2/2-3, 3-2/2-6,
                   3-3/2-1, 3-3/2-5, 3-3/2-7,
                   3-4/2-3, 3-4/2-5,
                   3-5/2-2, 3-5/2-5, 3-5/2-6,
                   3-6/2-2, 3-6/2-3, 3-6/2-7,
                   3-7/2-6, 3-7/2-7,
                   4-4/3-1, 4-4/3-2, 4-4/3-3, 4-4/3-4, 4-4/3-5, 4-4/3-6, 4-4/3-7}
         \draw[->] (a-\i) -- (a-\j);
\end{tikzpicture}
\caption{Representatives of neighborhood functions and their relation w.r.t.\ information;
         a node $x$ stands for the neighborhood function $\r^x$;
         an arrow from $x$ to $y$ means $\r^x \prec \r^y$.}
\label{fig:verification_classes}
\end{figure}
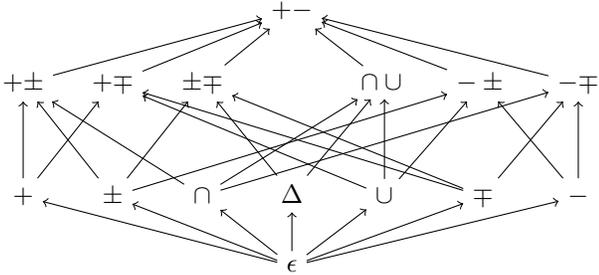

\begin{lemma}
\label{lemma:verification_classes}
All neighborhood functions are represented by the ones
depicted in Figure~\ref{fig:verification_classes}
and
the $\prec$-relation represented by arcs in Figure~\ref{fig:verification_classes} holds.
\end{lemma}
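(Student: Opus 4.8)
The plan is to recast the entire comparison of neighborhood functions in terms of the three disjoint regions into which two sets partition the universe, turning the problem into a finite computation over subsets of a three-element index set. For arbitrary $S,S'\subseteq\U$ I would fix the pairwise disjoint sets $R_1 = S\cap S'$, $R_2 = S\sm S'$ and $R_3 = S'\sm S$, and observe that each of the eight basic functions returns a union of some of these regions: $\r^\epsilon\leftrightarrow\emptyset$, $\r^\cap\leftrightarrow\{1\}$, $\r^\pm\leftrightarrow\{2\}$, $\r^\mp\leftrightarrow\{3\}$, $\r^+\leftrightarrow\{1,2\}$, $\r^-\leftrightarrow\{1,3\}$, $\r^\Delta\leftrightarrow\{2,3\}$, $\r^\cup\leftrightarrow\{1,2,3\}$. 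This assigns to every basic function a \emph{type} $\subseteq\{1,2,3\}$, the assignment being a bijection onto $\powerset{\{1,2,3\}}$; an $n$-ary neighborhood function then corresponds to the family $\mathcal{I}\subseteq\powerset{\{1,2,3\}}$ of the types of its components (modulo reordering, duplicates, and dropping $\emptyset$). The point that makes the reduction faithful is that, as $S,S'$ range over all pairs of sets, $(R_1,R_2,R_3)$ ranges over all triples of pairwise disjoint sets, so no algebraic coincidences among region values are forced.

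Next I would characterise $\succeq$ combinatorially. Because the regions are pairwise disjoint, applying $\cup$, $\cap$ or $\sm$ to the \emph{values} of two basic functions yields exactly the value of the function whose type is the corresponding $\cup$, $\cap$ or $\sm$ of the two types. Hence from the tuple $\r^x(S,S')$ one can reconstruct precisely the values of all types in the ring $\langle\mathcal{I}_x\rangle$ generated (under $\cup,\cap,\sm$, together with $\emptyset$) by the types of $\r^x$ inside $\powerset{\{1,2,3\}}$. Since Definition~\ref{def:informative} permits $\delta$ to be an arbitrary function, $\r^x\succeq\r^y$ holds exactly when $\r^x(S,S')=\r^x(T,T')$ implies $\r^y(S,S')=\r^y(T,T')$ for all arguments; I would prove this equivalent to $\mathcal{I}_y\subseteq\langle\mathcal{I}_x\rangle$. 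The easy inclusion builds $\delta$ from the set operations witnessing membership in the ring. For the converse, if some type $J\in\mathcal{I}_y$ is not in $\langle\mathcal{I}_x\rangle$ — i.e.\ $J$ is not a union of atoms of the partition of $\bigcup\mathcal{I}_x$ induced by $\mathcal{I}_x$, or $J$ leaves the support $\bigcup\mathcal{I}_x$ — I would exhibit two triples of disjoint regions that agree on every type of $\mathcal{I}_x$ yet differ on $J$, by perturbing a single point inside one atom or outside the support. This separating construction is the main obstacle, and the one place where the unrestricted nature of $\delta$ must be handled with care.

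It then follows that $\r^x\approx\r^y$ iff $\langle\mathcal{I}_x\rangle=\langle\mathcal{I}_y\rangle$, so the $\approx$-classes biject with the rings of sets on the ground set $\{1,2,3\}$. Each such ring is a Boolean subalgebra of $\powerset{U}$ for its support $U=\bigcup(\text{ring})\subseteq\{1,2,3\}$, hence is determined by $U$ together with a partition of $U$. Counting gives $1+3+6+5=15$ (one empty support, three singletons, three two-element supports carrying two partitions each, and the $5$ partitions of $\{1,2,3\}$), and every ring is realised since its atoms are themselves admissible types. I would match the fifteen rings to the fifteen nodes of Figure~\ref{fig:verification_classes} by reading off a canonical representative for each — for instance $\r^{+-}$ generates the full algebra $\powerset{\{1,2,3\}}$ and $\r^\epsilon$ the trivial ring $\{\emptyset\}$.

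Since $\succeq$ is now literally ring inclusion, $\prec$ is strict ring inclusion, with $\r^\epsilon$ at the bottom and $\r^{+-}$ at the top. I would finish by checking, using the support/partition description of each node, that every arc drawn in Figure~\ref{fig:verification_classes} is a strict ring inclusion and that these are precisely its covering pairs; with the atom structure of each class in hand this is a routine, if slightly tedious, case check.
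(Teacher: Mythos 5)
Your proposal is correct, and it takes a genuinely different route from the paper's proof. The paper works directly with the basic functions: it exhibits explicit maps $\delta$ built from set operations (using identities such as $S = (S\sm S')\cup(S\cap S')$) to establish concrete equivalences like $\r^{+\pm}\approx\r^{+\cap}\approx\r^{\pm\cap}$, groups the two-component functions into the representatives of Figure~\ref{fig:verification_classes}, handles $n\geq 3$ by appeal to $\r^{+-}$, and then merely asserts the negative direction ($\r^x\not\succeq\r^y$, needed for strictness of the arcs) as ``the impossibility of finding a function $\delta$''. Your identification of $\approx$-classes with rings of subsets of $\{1,2,3\}$ via the region decomposition $(S\cap S',\, S\sm S',\, S'\sm S)$ is more systematic: the single equivalence ``$\r^x\succeq\r^y$ iff $\mathcal{I}_y\subseteq\langle\mathcal{I}_x\rangle$'' subsumes all of the paper's explicit $\delta$-constructions at once, and its converse half --- your point-perturbation separation argument, exploiting that $\delta$ exists iff $\r^y$ factors through $\r^x$ --- supplies exactly the rigorous proof of the strict part of the order that the paper leaves implicit. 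Your count $1+3+6+5=15$ also explains why there are precisely fifteen classes rather than verifying this by enumeration. Moreover, your framework treats all $n$-tuples uniformly, whereas the paper's claim that every function with $n\geq 3$ components is represented by $\r^{+-}$ is too strong as literally stated: for instance, $\r^{\pm\mp\Delta}$ generates only the ring $\{\emptyset,\{2\},\{3\},\{2,3\}\}$ and is therefore represented by $\r^{\pm\mp}$, not by $\r^{+-}$ (the lemma itself is unaffected, since $\r^{\pm\mp}$ is a node of the figure, but your approach closes this gap). What the paper's route buys is brevity and concreteness; what yours buys is a complete and uniform argument, at the price of setting up the faithfulness of the region encoding, the atoms-and-support description of generated rings, and the separating construction --- each of which you have correctly identified and each of which does go through.
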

\begin{proof}
We begin by showing that all neighborhood functions are represented in Figure~\ref{fig:verification_classes}.
Clearly, each neighborhood function $\r^x$ represents itself, i.e. $\r^x \approx \r^x$.
All neighborhood functions for $n=1$
are are depicted in Figure~\ref{fig:verification_classes}.
We turn to $n=2$.
Consider the neighborhood functions $\r^{+\pm}$, $\r^{+\cap}$, and $\r^{\pm\cap}$,
defined as
$\r^{+\pm}(S,S') = (S,S \sm S')$,
$\r^{+\cap}(S,S') = (S,S \cap S')$, and
$\r^{\pm\cap}(S,S') = (S \sm S',S \cap S')$
for $S,S' \subseteq \U$.
Observe that $S = (S \sm S') \cup (S \cap S')$.
Hence, we can easily define functions in the spirit of Definition~\ref{def:informative}
mapping the images of the function to one another:
\begin{itemize}
\item
$\delta_1(\r^{+\pm}(S,S')) = \delta_1(S,S \sm S') =_{def} (S,S \sm (S \sm S')) = (S,S \cap S') = \r^{+\cap}(S,S')$;
\item
$\delta_2(\r^{+\cap}(S,S')) = \delta_2(S,S \cap S') =_{def} (S \sm (S \cap S'),S \cap S') = (S \sm S',S \cap S') = \r^{\pm\cap}(S,S')$;
\item
$\delta_3(\r^{\pm\cap}(S,S')) = \delta_3(S \sm S',S \cap S') =_{def} ((S \sm S') \cup (S \cap S'),S \sm S') = (S,S \sm S') = \r^{+\pm}(S,S')$.
\end{itemize}
Therefore, $\r^{+\pm} \approx \r^{+\cap} \approx \r^{\pm\cap}$.
In particular, they are all represented by $\r^{\pm}$.
We can apply the same reasoning to other combinations of neighborhood functions
and get the following equivalences w.r.t.\ information content:
$\r^{+\mp} \approx \r^{+\cup} \approx \r^{\mp\cup}$;
$\r^{\pm\mp} \approx \r^{\pm\Delta} \approx \r^{\mp\Delta}$;
$\r^{\cap\cup} \approx \r^{\cap\Delta} \approx \r^{\cup\Delta}$;
$\r^{-\pm} \approx \r^{-\cup} \approx \r^{\pm\cup}$; and
$\r^{-\mp} \approx \r^{-\cap} \approx \r^{\mp\cap}$,
with the functions stated first
acting as representatives in Figure~\ref{fig:verification_classes}.

For the remaining functions returning $2$-tuples we get
$\r^{+-} \approx \r^{+\Delta} \approx \r^{-\Delta}$
by %
\begin{itemize}
\item
$\delta_4(\r^{+-}(S,S')) = \delta_4(S,S') =_{def} (S,(S\cup S') \sm (S\cap S')) = \r^{+\Delta}(S,S')$;
\item
$\delta_5(\r^{+\Delta}(S,S')) = \delta_5(S,(S\cup S') \sm (S\cap S')) =_{def} ((S\sm((S\cup S') \sm (S\cap S')))\cup ((S\cup S') \sm (S\cap S'))\sm S,(S\cup S') \sm (S\cap S')) = (S',(S\cup S') \sm (S\cap S')) = \r^{-\cap}(S,S')$;
\item
$\delta_6(\r^{-\Delta}(S,S')) = \delta_6(S',(S\cup S') \sm (S\cap S')) =_{def} ((S'\sm((S\cup S') \sm (S\cap S')))\cup ((S\cup S') \sm (S\cap S'))\sm S',S') = (S,S') = \r^{+-}(S,S')$.
\end{itemize}

Finally, every neighborhood function $\r^{x_1 \dots x_n}$
with %
$n \geq 3$
is represented by $\r^{+-}$ since we can compute all possible sets
from $S$ and $S'$.

Now consider two functions $\r^x$ and $\r^y$ such that
there is an arrow from $x$ to $y$ in Figure~\ref{fig:verification_classes}.
It is easy to see that $\r^y \succeq \r^x$ since,
for sets of arguments $S$ and $S'$,
$\r^x(S,S')$ is either contained in $\r^y(S,S')$ or
obtainable from $\r^y(S,S')$ by basic set operations.
The fact that $\r^x  \not\succeq \r^y$, entailing $\r^y \succ \r^x$,
follows from the impossibility of finding a function $\delta$
such that $\delta(\r^x(S,S')) = \r^y(S,S')$.
\end{proof}

If the information provided by a neighborhood function 
is sufficient to compute the extensions, 
we say the semantics is verifiable by
the class induced by the neighborhood function.

\begin{definition} \label{def:verifiable}
A semantics $\sigma$ is \textit{verifiable} by
the verification class induced by
the neighborhood function
$\r^x$ returning $n$-tuples (or simply, $x$\textit{-veri\-fi\-able}) iff
there is a function (also called \textit{criterion})
$\gamma_{\sigma} : \left(\powerset{\U}\right)^n \times \powerset{\U} \to \powerset{\powerset{\U}}$
s.t. for every AF $\F \in \afs$ we have:
\[
\gamma_{\sigma}\left(\widetilde{\F}^x, A(\F)\right) = \sigma(\F).
\]
Moreover, $\sigma$ is \emph{exactly $x$-verifiable} iff
$\sigma$ is $x$-verifiable and there is no verification class induced by $\r^y$ with
$\r^y \prec \r^x$ such that $\sigma$ is $y$-verifiable.
\end{definition}

Observe that if a semantics $\sigma$ is $x$-verifiable then for any two AFs $\F$ and $\G$ with $\widetilde{\F}^x = \widetilde{\G}^x$ and $A(\F) = A(\G)$
it must hold that $\sigma(F) = \sigma(G)$.

We proceed with a list of criteria showing that any semantics mentioned in Definition~\ref{def:semantics} is verifiable by a verification class induced by a certain neighborhood function. %
In the following, we abbreviate the tuple $(\widetilde{\F}^x,A(\F))$ by $\widetilde{\F}^x_A$. 

{\small
\begin{align*}
\gamma_{\na}(\widetilde{\F}^\epsilon_A) =  \{ & S\mid 
           S\in\widetilde{\F}, S \textit{ is }\subseteq\textit{-maximal in }\widetilde{\F} \}; \\
\gamma_{\stg}(\widetilde{\F}^+_A) = \{ & S\mid 
           (S,S^+)\in\widetilde{\F}^+,
             S^+ \textit{ is }\subseteq\textit{-maximal in } \\
           & \{C^+\mid (C,C^+)\in\widetilde{\F}^+\} \}; \\
\gamma_{\stb}(\widetilde{\F}_A^+) = \{ & S\mid 
           (S,S^+)\in\widetilde{\F}^+, S^+ = A\}; \\
\gamma_{\ad}(\widetilde{\F}^\mp_A) = \{ & S\mid 
           (S,S^\mp)\in\widetilde{\F}^\mp, S^\mp = \emptyset\}; \\
\gamma_{\pr}(\widetilde{\F}^\mp_A) = \{ & S\mid 
           S\in\gamma_{\ad}(\widetilde{\F}^\mp_A),
             S \textit{ is }\subseteq\textit{-maximal in } \gamma_{\ad}(\widetilde{\F}^\mp_A) \}; \\
\gamma_{\semi}(\widetilde{\F}^{+\mp}_A) = \{ & S\mid
           S\in\gamma_{\ad}(\widetilde{\F}^\mp_A), 
             S^+ \textit{ is }\subseteq\textit{-maximal in } \\
           &     \{C^+\mid (C,C^+,C^\mp)\in\widetilde{\F}^{+\mp}, C\in\gamma_{\ad}(\widetilde{\F}^\mp_A)\} \}; \\
\gamma_{\id}(\widetilde{\F}^\mp_A) = \{ & S\mid
           S \textit{ is }\subseteq\textit{-maximal in } \\
           & \{C \mid C \in \gamma_{\ad}(\widetilde{\F}^\mp_A), C \subseteq \bigcap \gamma_\pr(\widetilde{\F}^\mp_A)\} \};\\
\gamma_{\eg}(\widetilde{\F}^{+\mp}_A) = \{ & S\mid
           S \textit{ is }\subseteq\textit{-maximal in } \\
           & \{C \mid C \in \gamma_{\ad}(\widetilde{\F}^\mp_A), C \subseteq \bigcap \gamma_\semi(\widetilde{\F}^{+\mp}_A)\}\};\\
\gamma_{\strad}(\widetilde{\F}^{-\pm}_A) = \{ & S \mid 
           (S,S^-,S^\pm) \in \widetilde{\F}^{-\pm}, \\
           & \exists (S_0,S_0^-,S_0^\pm),\dots,(S_n,S_n^-,S_n^\pm) \in \widetilde{\F}^{-\pm}: \\
           & (\emptyset = S_0 \subset \dots \subset S_n = S \wedge \\ 
           & \forall i \in \{1,\dots,n\} : S_i^- \subseteq S_{i-1}^\pm) \};
\end{align*}

\begin{align*}
\gamma_{\gr}(\widetilde{\F}^{-\pm}_A) = \{ & S \mid
           S \in \gamma_{\strad}(\widetilde{\F}^{-\pm}_A), \\
           & \forall (\bar{S},\bar{S}^-,\bar{S}^\pm) \in \widetilde{\F}^{-\pm} : 
             \bar{S} {\supset} S \Rightarrow (\bar{S}^- {\sm} S^\pm) {\neq} \emptyset) \}; \\
\gamma_{\co}(\widetilde{\F}^{+-}_A) = \{ & S \mid 
           (S,S^+,S^-) \in \widetilde{\F}^{+-}, (S^- \sm S^+) = \emptyset, \\
           & \forall (\bar{S},\bar{S}^+,\bar{S}^-) \in \widetilde{\F}^{+-} : 
             \bar{S} {\supset} S \Rightarrow (\bar{S}^- {\sm} S^+) {\neq} \emptyset) \}.
\end{align*}}

Instead of a formal proof we give the following explanations.
First of all it is easy to see that the naive semantics
is verifiable by the verification class induced by $\r^\epsilon$ 
since the naive extensions can be determined by the conflict-free sets.
Stable and stage semantics, on the other hand,
utilize the range of each conflict-free set in addition.
Hence they are verifiable by the verification class induced by $\r^+$.
Now consider admissible sets.
Recall that a conflict-free $S$ set is admissible if and only if
it attacks all attackers.
This is captured exactly by the condition $S^\mp = \emptyset$,
hence admissible sets are verifiable by the verification class induced by $\r^\mp$.
The same holds for preferred semantics,
since we just have to determine the maximal conflict-free sets
with $S^\mp = \emptyset$.
Semi-stable semantics, however, needs the range of each conflict-free set
in addition, see $\gamma_\semi$,
which makes it verifiable by the verification class induced by $\r^{+\mp}$.
Finally consider the criterion $\gamma_\co$.
The first two conditions for a set of arguments $S$ stand
for conflict-freeness and admissibility, respectively.
Now assume the third condition does not hold,
i.e., there exists a tuple
$(\bar{S},\bar{S}^+,\bar{S}^-) \in \widetilde{F}^{+-}$
with $\bar{S} \supset S$ and $\bar{S}^- \sm S^+ = \emptyset$.
This means that every argument attacking $\bar{S}$ is attacked
by $S$, i.e., $\bar{S}$ is defended by $S$.
Hence $S$ is not a complete extension,
showing that $\gamma_\co(\widetilde{F}^{+-}_A) = \co(F)$ for each $F \in \afs$.
One can verify that all criteria from the list are adequate
in the sense that they describe the extensions of the corresponding semantics.

We show now that the formal concepts of verifiability and being more informative behave correctly in the sense that the use of more informative neighborhood functions do not lead to a loss of verification capacity.

\begin{proposition}
If a semantics $\sigma$ is $x$-verifiable,
then $\sigma$ is verifiable by all verification classes induced by some $\r^y$ with $\r^y \succeq \r^x$.
\end{proposition}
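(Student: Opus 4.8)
The plan is to reduce $y$-verifiability to the assumed $x$-verifiability by pre-processing the richer data $\widetilde{\F}^y$ into the data $\widetilde{\F}^x$ that the known criterion $\gamma_\sigma$ expects. Since $\r^y \succeq \r^x$ (say $\r^y$ returns $m$-tuples and $\r^x$ returns $n$-tuples), Definition~\ref{def:informative} supplies a fixed function $\delta : \left(\powerset{\U}\right)^m \to \left(\powerset{\U}\right)^n$ with $\delta\left(\r^y(S,S')\right) = \r^x(S,S')$ for all $S,S' \subseteq \U$. The idea is to apply $\delta$ componentwise to the neighborhood part of each tuple stored in $\widetilde{\F}^y$, thereby recovering $\widetilde{\F}^x$, and then to invoke $\gamma_\sigma$.

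First I would lift $\delta$ to the level of verification data by defining, for any AF $\F$,
\[
\Delta(\widetilde{\F}^y) = \{(S,\delta(t)) \mid (S,t) \in \widetilde{\F}^y\}.
\]
The key step is the identity $\Delta(\widetilde{\F}^y) = \widetilde{\F}^x$. This follows by unfolding Definition~\ref{def:classes}: an arbitrary element of $\widetilde{\F}^y$ has the form $(S,\r^y(S^+_F,S^-_F))$ with $S \in \cf(\F)$, and instantiating the identity for $\delta$ at the pair $(S^+_F,S^-_F)$ gives $\delta(\r^y(S^+_F,S^-_F)) = \r^x(S^+_F,S^-_F)$. Thus $\Delta$ replaces every second component by exactly the value prescribed by $\r^x$ while leaving the conflict-free sets in the first components untouched, so the image is precisely $\{(S,\r^x(S^+_F,S^-_F)) \mid S \in \cf(\F)\} = \widetilde{\F}^x$.

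I would then define the new criterion by $\gamma'_\sigma(\widetilde{\F}^y,A(\F)) = \gamma_\sigma(\Delta(\widetilde{\F}^y),A(\F))$. Combining the identity above with the hypothesis of $x$-verifiability yields, for every $\F \in \afs$,
\[
\gamma'_\sigma(\widetilde{\F}^y,A(\F)) = \gamma_\sigma(\widetilde{\F}^x,A(\F)) = \sigma(\F),
\]
which is exactly the statement that $\sigma$ is $y$-verifiable.

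As for the main obstacle: there is no deep difficulty, and the entire content lies in checking that $\Delta$ is a single operation applicable uniformly across all AFs and that it reproduces $\widetilde{\F}^x$ without merging or losing tuples. The one point deserving care is this uniformity: $\Delta$ must not depend on $\F$, which is guaranteed because $\delta$ is fixed once and for all by the relation $\r^y \succeq \r^x$. Moreover, distinct conflict-free sets keep distinct first components under $\Delta$, and for a fixed $S$ the second component $\r^y(S^+_F,S^-_F)$ is uniquely determined, so no tuple is identified with another and no information relevant to $\gamma_\sigma$ is lost.
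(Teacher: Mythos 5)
Your proposal is correct and follows essentially the same route as the paper's own proof: both use the function $\delta$ guaranteed by $\r^y \succeq \r^x$ to transform each tuple $(S,\r^y(S^+_F,S^-_F)) \in \widetilde{\F}^y$ into $(S,\r^x(S^+_F,S^-_F))$, observe that this recovers $\widetilde{\F}^x$ exactly, and then define the new criterion $\gamma'_\sigma$ as the composition of this transformation with $\gamma_\sigma$. Your additional remarks on uniformity of $\Delta$ and non-merging of tuples are correct but not essential; they only spell out what the paper leaves implicit.
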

\begin{proof}
As $\sigma$ is verifiable by the verification class induced by $\r^x$ it holds that
there is some $\gamma_\sigma$ such that for all $\F \in \afs$,
$\gamma_\sigma(\widetilde{\F}^x,A(\F)) = \sigma(\F)$.
Now let $\r^y \succeq \r^x$, meaning that there is some $\delta$ such that
$\delta(\r^y(S,S')) = \r^x$.
We define
$\gamma'_\sigma(\widetilde{\F}^y,A(\F)) = \gamma_\sigma(\{(S,\delta(\Ss)) \mid (S,\Ss) \in \widetilde{\F}^y\},A(\F))$ and
observe that
$\{(S,\delta(\Ss)) \mid (S,\Ss) \in \widetilde{\F}^y\} = \widetilde{\F}^x$,
hence
$\gamma'_\sigma(\widetilde{\F}^y,A(\F)) = \sigma(\F)$ for each $F \in \afs$.
\end{proof}

In order to prove unverifiability  of a semantics $\sigma$ w.r.t.\ a class induced by a certain $\r^x$
it suffices to present two AFs $\F$ and $\G$ such that
$\sigma(\F)\neq\sigma(\G)$
but, $\widetilde{\F}^x = \widetilde{\G}^x$ and $A(\F)~=~A(\G)$.
Then the verification class induced by $\r^x$ does not provide enough
information to verify $\sigma$. 

In the following we will use this strategy to show exact verifiability.
Consider a semantics $\sigma$ which is verifiable by a class induced by $\r^x$.
If $\sigma$ is unverifiable by all verifiability classes induced by $\r^y$ with
$\r^y \prec \r^x$
we have that $\sigma$ is exactly verifiable by $\r^x$.
The following examples study this issue for the
semantics under consideration.

\begin{example}
\label{ex:exact_verify}
The complete semantics is ${+-}$-verifiable
as seen before.
The following AFs show that it is even exactly verifiable by that class.
\begin{center}
\begin{tikzpicture}
	\node (a1) at (-0.6,0) [circle, thick, draw, label = left:$\F_1:$]{$a$};
    \node (b1) at (0.8,0) [circle, thick, draw]{$b$};

    \node (a1') at (3.4,0) [circle, thick, draw, label = left:$\F_1':$] {$a$};
    \node (b1') at (4.8,0) [circle, thick, draw] {$b$};

\draw[->,thick] (b1) to [thick,loop,distance=0.5cm] (b1);
\draw[->,thick] (b1') to [thick,loop,distance=0.5cm] (b1');

\draw[->,thick] (b1) to [thick, bend right] (a1);

	\node (a2) at (-0.6,-1) [circle, thick, draw, label = left:$\F_2:$]{$a$};
    \node (b2) at (0.8,-1) [circle, thick, draw]{$b$};
    \node (c2) at (2,-1) [circle, thick, draw]{$c$};

    \node (a2') at (3.4,-1) [circle, thick, draw, label = left:$\F_2':$] {$a$};
    \node (b2') at (4.8,-1) [circle, thick, draw] {$b$};
    \node (c2') at (6.2,-1) [circle, thick, draw] {$c$};

\draw[->,thick] (b2) to [thick,loop,distance=0.5cm] (b2);
\draw[->,thick] (b2') to [thick,loop,distance=0.5cm] (b2');

\draw[->,thick] (b2) to [thick, bend right] (c2);
\draw[->,thick] (c2) to [thick, bend right] (b2);
\draw[->,thick] (a2') to [thick, bend right] (b2');
\draw[->,thick] (c2') to [thick, bend right] (b2');
\draw[->,thick] (b2') to [thick, bend right] (c2');

	\node (a3) at (-0.6,-2) [circle, thick, draw, label = left:$\F_3:$]{$a$};
    \node (b3) at (0.8,-2) [circle, thick, draw]{$b$};

    \node (a3') at (3.4,-2) [circle, thick, draw, label = left:$\F_3':$] {$a$};
    \node (b3') at (4.8,-2) [circle, thick, draw] {$b$};

\draw[->,thick] (b3) to [thick,loop,distance=0.5cm] (b3);
\draw[->,thick] (b3') to [thick,loop,distance=0.5cm] (b3');

\draw[->,thick] (a3) to [thick, bend right] (b3);
\draw[->,thick] (b3) to [thick, bend right] (a3);

	\node (a4) at (-0.6,-3) [circle, thick, draw, label = left:$\F_4:$]{$a$};
    \node (b4) at (0.8,-3) [circle, thick, draw]{$b$};

    \node (a4') at (3.4,-3) [circle, thick, draw, label = left:$\F_4':$] {$a$};
    \node (b4') at (4.8,-3) [circle, thick, draw] {$b$};

\draw[->,thick] (b4) to [thick,loop,distance=0.5cm] (b4);
\draw[->,thick] (b4') to [thick,loop,distance=0.5cm] (b4');

\draw[->,thick] (a4) to [thick, bend right] (b4);
\draw[->,thick] (b4) to [thick, bend right] (a4);
\draw[->,thick] (b4') to [thick, bend right] (a4');

    \node (a5) at (-0.6,-4) [circle, thick, draw, label = left:$\F_5:$]{$a$};
    \node (b5) at (0.8,-4) [circle, thick, draw]{$b$};

    \node (a5') at (3.4,-4) [circle, thick, draw, label = left:$\F_5':$] {$a$};
    \node (b5') at (4.8,-4) [circle, thick, draw] {$b$};

\draw[->,thick] (b5) to [thick,loop,distance=0.5cm] (b5);
\draw[->,thick] (b5') to [thick,loop,distance=0.5cm] (b5');

\draw[->,thick] (a5) to [thick, bend right] (b5);
\draw[->,thick] (b5) to [thick, bend right] (a5);
\draw[->,thick] (a5') to [thick, bend right] (b5');

	\node (a6) at (-0.6,-5) [circle, thick, draw, label = left:$\F_6:$]{$a$};
    \node (b6) at (0.8,-5) [circle, thick, draw]{$b$};

    \node (a6') at (3.4,-5) [circle, thick, draw, label = left:$\F_6':$] {$a$};
    \node (b6') at (4.8,-5) [circle, thick, draw] {$b$};

\draw[->,thick] (b6) to [thick,loop,distance=0.5cm] (b6);
\draw[->,thick] (b6') to [thick,loop,distance=0.5cm] (b6');

\draw[->,thick] (b6') to [thick, bend right] (a6');
\draw[->,thick] (a6) to [thick, bend right] (b6);

\end{tikzpicture}
\end{center}
First consider the AFs $\F_1$ and $\F_1'$,
and observe that
$\widetilde{\F_1}^{+\pm} = \{(\emptyset,\emptyset,\emptyset),(\{a\},\emptyset,\emptyset)\} = \widetilde{\F_1'}^{+\pm}$.
On the other hand $\F_1$ and $\F_1'$
differ in their complete extensions
since $\co(\F_1) = \{\emptyset\}$ but
$\co(\F_1') = \{\{a\}\}$.
Therefore complete semantics is unverifiable
by the verification class induced by $\r^{+\pm}$.
Likewise, this can be shown for the classes induced by
$\r^{-\mp}$, $\r^{\pm\mp}$, $\r^{-\pm}$, $\r^{+\mp}$,
and $\r^{\cap\cup}$, respectively:

{\small
\begin{itemize}\itemsep0pt
\item %
$\widetilde{\F_2}^{-\mp} = \{(\emptyset,\emptyset,\emptyset),$ $(\{a\},\emptyset,\emptyset),$ $(\{a,c\},\{b\},\emptyset),$ $(\{c\},\{b\},\emptyset)\} = \widetilde{\F_2'}^{-\mp}$,
but
$\co(\F_2) = \{\{a\},\{a,c\}\} \neq \{\{a,c\}\} = \co(\F_2')$.
\item
$\widetilde{\F_3}^{\pm\mp} = %
\widetilde{\F_3'}^{\pm\mp}$,
but
$\co(\F_3) = \{\emptyset,\{a\}\} \neq \{\{a\}\} = \co(\F_3')$.
\item
$\widetilde{\F_4}^{-\pm} = %
\widetilde{\F_4'}^{-\pm}$,
but
$\co(\F_4) = \{\emptyset,\{a\}\} \neq \{\emptyset\} = \co(\F_4')$.
\item
$\widetilde{\F_5}^{+\mp} = %
\widetilde{\F_5'}^{+\mp}$,
but
$\co(\F_5) = \{\emptyset,\{a\}\} \neq \{\{a\}\} = \co(\F_5')$.
\item
$\widetilde{\F_6}^{\cap\cup} = %
\widetilde{\F_6'}^{\cap\cup}$,
but
$\co(\F_6) = \{\{a\}\} \neq \{\emptyset\} = \co(\F_6')$.
\end{itemize}}
Hence the complete semantics
is exactly verifiable by the verification class induced by $\r^{+-}$.
\end{example}

\begin{example}
\label{ex:semi_eager_unverifiable}
Consider the semi-stable and eager semantics
and recall that they are ${+\mp}$-verifiable
In order to show exact verifiability it suffices to show
unverifiability by the classes induced by $\r^+$, $\r^\cup$, and $\r^\mp$
(cf.\ Figure~\ref{fig:verification_classes});
$F_1$ and $F_6$ are taken from Example~\ref{ex:exact_verify} above.
{\small
\begin{itemize}\itemsep0pt
\item
$\widetilde{\F_1}^{+} = %
\widetilde{\F_1'}^{+}$,
but
$\semi(\F_1) = \eager(\F_1) = \{\emptyset\} \neq \{\{a\}\} = \semi(\F_1') = \eager(\F_1')$.
\item
$\widetilde{\F_6}^{\cup} = %
\widetilde{\F_6'}^{\cup}$,
but
$\semi(\F_6) = \eager(\F_6) = \{\{a\}\} \neq \{\emptyset\} = \semi(\F_6') = \eager(\F_6')$.
\item
$\widetilde{\F_7}^{\mp} = %
\widetilde{\F_7'}^{\mp}$,
but
$\semi(\F_7) = \{\{b\}\} \neq \{\{a\},\{b\}\} = \semi(\F_7')$ and
$\eager(\F_7) = \{\{b\}\} \neq \{\emptyset\} = \eager(\F_7')$.
\end{itemize}}
\begin{center}
\begin{tikzpicture}
	\node (a1) at (-0.6,0) [circle, thick, draw, label = left:$\F_7:$]{$a$};
    \node (b1) at (0.8,0) [circle, thick, draw]{$b$};
    \node (c1) at (2,0) [circle, thick, draw]{$c$};

    \node (a1') at (3.4,0) [circle, thick, draw, label = left:$\F_7':$] {$a$};
    \node (b1') at (4.8,0) [circle, thick, draw] {$b$};
    \node (c1') at (6,0) [circle, thick, draw] {$c$};

\draw[->,thick] (c1) to [thick,loop,distance=0.5cm] (c1);
\draw[->,thick] (c1') to [thick,loop,distance=0.5cm] (c1');

\draw[->,thick] (b1) to [thick, bend right] (a1);
\draw[->,thick] (a1) to [thick, bend right] (b1);
\draw[->,thick] (b1') to [thick, bend right] (a1');
\draw[->,thick] (a1') to [thick, bend right] (b1');
\draw[->,thick] (b1) to [thick, bend right] (c1);
\end{tikzpicture}
\end{center}
Hence, both the semi-stable and eager semantics
are exactly verifiable by the verification class induced by $\r^{+\mp}$.
\end{example}

\begin{example}
\label{ex:grd_sad_unverifiable}
Now consider the grounded and strong admissible semantics
and recall that they are ${-\pm}$-verifiable
In order to show exact verifiability we have to show
unverifiability by the classes induced by $\r^\pm$, $\r^-$, and $\r^\cup$
(cf.\ Figure~\ref{fig:verification_classes}); again,
the AFs from Example~\ref{ex:exact_verify} can be reused.
{\small
\begin{itemize}\itemsep0pt
\item
$\widetilde{\F_1}^{\pm} = \widetilde{\F_1'}^{\pm}$,
but
$\grd(\F_1) = \{\emptyset\} \neq \{\{a\}\} = \grd(\F_1')$ and
$\sad(\F_1) = \{\emptyset\} \neq \{\emptyset,\{a\}\} = \sad(\F_1')$.
\item
$\widetilde{\F_2}^{-} = \widetilde{\F_2'}^{-}$,
but
$\grd(\F_2) = \{\{a\}\} \neq \{a,c\} = \grd(\F_2')$ and
$\sad(\F_2) = \{\emptyset,\{a\}\} \neq \{\emptyset,\{a\},\{a,c\}\} = \sad(\F_2')$
\item
$\widetilde{\F_6}^{\cup} = \widetilde{\F_6'}^{\cup}$,
but
$\grd(\F_6) = \{\{a\}\} \neq \{\emptyset\} = \grd(\F_6')$ and
$\sad(\F_6) = \{\emptyset,\{a\}\} \neq \{\emptyset\} = \sad(\F_6')$.
\end{itemize}}
Hence, both the grounded and strong admissible semantics
are exactly verifiable by the verification class induced by $\r^{+\mp}$.
\end{example}

\begin{example}
\label{ex:last_unverifiable}
Finally consider stable, stage, admissible, preferred and ideal semantics.
They are either $+$-verifiable ($\stb$ and $\stage$) or $\mp$-verifiable ($\adm$, $\pref$, and $\ideal$).
In order to show that these verification classes are exact
we have to show unverifiability w.r.t.\ the verification class induced by $\r^\epsilon$.
Consider, for instance, the AFs $\F_4$ and $\F_4'$ from Example~\ref{ex:exact_verify}.
We have
$\widetilde{\F_4}^{\epsilon} = \widetilde{\F_4'}^{\epsilon}$,
but $\adm(\F_4) = \{\emptyset,\{a\}\} \neq  \{\emptyset\} = \adm(\F_4')$,
$\stb(\F_4) = \{\{a\}\} \neq  \emptyset = \stb(\F_4')$, and
$\sigma(\F_4) = \{\{a\}\} \neq \{\emptyset\} = \sigma(\F_4')$ for
$\sigma \in \{\stage,\pref,\ideal\}$,
showing exactness of the respective verification classes.
\end{example}

The insights obtained through Examples~\ref{ex:exact_verify},~\ref{ex:semi_eager_unverifiable},~\ref{ex:grd_sad_unverifiable}, and~\ref{ex:last_unverifiable}
show that the
verification classes obtained
from the criteria given above
are indeed exact.
Figure~\ref{fig:classes_of_semantics} shows the relation between the semantics
under consideration
with respect to their exact verification classes.

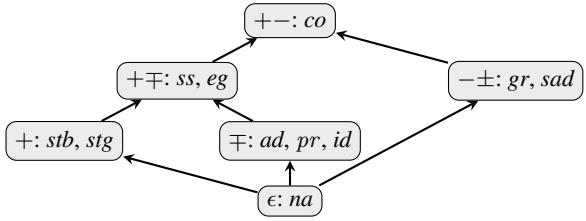
\begin{figure}[t]
\centering
\begin{tikzpicture}
\path
node[snode] (na) {$\epsilon$: $\na$}
++(-3,0.8) node[snode] (stb) {$+$: $\stb$, $\stg$} 
++(3,0) node[snode] (pref) {$\mp$: $\adm$, $\pref$, $\id$}
++(-1.5,0.8) node[snode] (semi) {$+\mp$: $\semi$, $\eg$}
++(4.5,0) node[snode] (grd) {$-\pm$: $\gr$, $\sad$}   
++(-3,0.8) node[snode] (co) {$+-$: $\co$};
				\path [->, thick,>=stealth]
			(na) edge (stb)
            (na) edge (pref)
            (na) edge (grd)
            (pref) edge (semi)
            (stb) edge (semi)
            (semi) edge (co)
            (grd) edge (co)
			;	
\end{tikzpicture}
\caption{Semantics and their exact verification classes.}
\label{fig:classes_of_semantics}
\end{figure}

We turn now to the main theorem stating that any \textit{rational} semantics (recall that all semantics we consider in this paper are rational) is exactly verifiable by one of the $15$ different verification classes. %

\begin{theorem}
Every semantics which is %
rational
is exactly verifiable by a verification class induced by
one of the neighborhood functions
presented in Figure~\ref{fig:verification_classes}.
\end{theorem}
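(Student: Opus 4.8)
The plan is to reduce the whole statement to a single assertion: that \emph{every} rational semantics is verifiable by the most informative class of Figure~\ref{fig:verification_classes}, namely the one induced by $\r^{+-}$. Once this is in place, exactness follows cheaply. By Lemma~\ref{lemma:verification_classes} every neighborhood function is $\approx$-equivalent to one of the $15$ representatives, and $\approx$-equivalent functions induce classes of equal informative content, hence identical verification capacity (apply the Proposition preceding the theorem in both directions). Therefore the set of representatives by which $\sigma$ is verifiable is a nonempty (it contains $\r^{+-}$) upward-closed subset of the finite poset drawn in Figure~\ref{fig:verification_classes}; any $\prec$-minimal element of this set is, by Definition~\ref{def:verifiable}, a class by which $\sigma$ is exactly verifiable, and it is one of the $15$. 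So the entire content sits in the top class.

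To establish ${+-}$-verifiability of an arbitrary rational $\sigma$, I would first prove a reconstruction lemma: the pair $(\widetilde{\F}^{+-},A(\F))$ determines $\F^l$, the reduct from Definition~\ref{def:semantics_conditions}. Recall $\widetilde{\F}^{+-}=\{(S,S^+_\F,S^-_\F)\mid S\in\cf(\F)\}$. From the first components one reads off $\cf(\F)$; since a self-loop lies in no conflict-free set while every other argument $a$ satisfies $\{a\}\in\cf(\F)$, one recovers $L(\F)=A(\F)\sm\bigcup_{S\in\cf(\F)} S$. Every attack incident to a non-self-loop is then visible: if $a\notin L(\F)$ the tuple for $\{a\}$ exposes all out-attacks of $a$ through $\{a\}^+_\F$, and if $b\notin L(\F)$ the tuple for $\{b\}$ exposes all in-attacks of $b$ through $\{b\}^-_\F$. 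The only edges leaving no trace in $\widetilde{\F}^{+-}$ are the $(a,b)$ with $a\neq b$ and $a,b\in L(\F)$, since such $a,b$ never belong to any conflict-free set and hence never enter any range or anti-range. These are exactly the edges deleted by $\F\mapsto\F^l$, so the read-off procedure is a well-defined map $\rho$ with $\rho(\widetilde{\F}^{+-},A(\F))=\F^l$.

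With the lemma in hand verifiability is immediate. One checks that deleting edges between two distinct self-loops changes neither $\cf(\F)$ nor, for $S\in\cf(\F)$, the values $S^+_\F$ and $S^-_\F$ (the deleted edges have both endpoints outside every conflict-free set); hence $\widetilde{(\F^l)}^{+-}=\widetilde{\F}^{+-}$, confirming that $\rho$ depends on the verification data alone. Defining the criterion $\gamma_\sigma\eqdef\sigma\circ\rho$ then gives $\gamma_\sigma(\widetilde{\F}^{+-},A(\F))=\sigma(\F^l)=\sigma(\F)$, the last equality being rationality of $\sigma$. Thus $\sigma$ is ${+-}$-verifiable, and the reduction of the first paragraph completes the proof.

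I expect the main obstacle to be the reconstruction lemma, and in particular pinning down precisely which attacks are invisible to $\r^{+-}$ and checking that this invisible set coincides with the edges removed by $\cdot^l$. The delicate points are (i) that a self-loop contributes to the ranges and anti-ranges of other conflict-free sets only through its \emph{non-self-loop} neighbours, so information about self-loop-to-self-loop edges never surfaces, and (ii) the well-definedness of $\gamma_\sigma$, i.e.\ that two frameworks sharing $(\widetilde{\cdot}^{+-},A(\cdot))$ share $\F^l$ and, by rationality, the same extensions. The remaining bookkeeping---upward closure of verifiability and minimality within the finite poset---is routine given Lemma~\ref{lemma:verification_classes} and the Proposition preceding the theorem.
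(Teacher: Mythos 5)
Your proof is correct and takes essentially the same route as the paper's: both reduce exactness to verifiability by the top class $\r^{+-}$ via minimality in the finite poset of Figure~\ref{fig:verification_classes}, and both rest on the key observation that $(\widetilde{\F}^{+-},A(\F))$ determines $\F^l$ (singleton tuples expose all attacks except those between distinct self-loops, which are exactly the edges $\cdot^l$ deletes), so that rationality yields the criterion. The only difference is presentational: you build the reconstruction map $\rho$ explicitly and set $\gamma_\sigma = \sigma\circ\rho$, whereas the paper phrases the identical fact contrapositively (two AFs with equal data but different extensions would share $\F^l$, contradicting rationality).
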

\begin{proof}
First of all note that by Lemma~\ref{lemma:verification_classes},
$\r^{\epsilon}$ is the least informative neighborhood function and for every other
neighborhood function $\r^x$ %
it holds that $\r^\epsilon \preceq \r^{-}$.
Therefore, if a semantics is verifiable by the verification class induced by any $\r^x$
then it is exactly verifiable by a verification class induced by some $\r^y$
with $\r^\epsilon \preceq \r^y \preceq \r^x$.
Moreover, if a semantics is exactly verifiable by a class, then it is by definition also verifiable by this class.
Hence it remains to show that every semantics which is rational is verifiable by a verification class
presented in Figure~\ref{fig:verification_classes}.

We show the contrapositive, i.e.,
if a semantics is not verifiable by a verification class
induced by one of the neighborhood functions
presented in Figure~\ref{fig:verification_classes}
then it is not rational.

Assume a semantics $\sigma$ is not verifiable by one of the verification classes.
This means $\sigma$ is not verifiable by the verification class induced by $\r^{+-}$.
Hence there exist two AFs $\F$ and $\G$ such that
$\widetilde{\F}^{+-} = \widetilde{\G}^{+-}$ and
$A(\F) = A(\G)$,
but $\sigma(\F) \neq \sigma(\G)$.
For every argument $a$ which is not self-attacking,
a tuple $(\{a\},\{a\}^+,\{a\}^-)$ is contained in $\widetilde{\F}^{+-}$ (and in $\widetilde{\G}^{+-}$).
Hence $\F$ and $\G$ have the same not-self-attacking arguments and,
moreover these arguments have the same ingoing and outgoing attacks in $\F$ and $\G$.
This, together with $A(\F)=A(\G)$ implies that $\F^l = \G^l$ (see Definition~\ref{def:semantics_conditions}) holds.
But since $\sigma(\F) \neq \sigma(\G)$
we get that $\sigma$ is not rational,
which was to show.
\end{proof}

Note that the criterion giving evidence for verifiability
of a semantics by a certain class
has access to the set of arguments of a given framework.
In fact, only the criterion for stable semantics makes use of that.
Indeed, stable semantics needs this information since it is not verifiable by any class when using
a weaker notion of verifiability, which rules out the usage of $A(\F)$.

\section{Intermediate Semantics} \label{sec:intermediate}
A type of semantics which has aroused quite some interest in the literature (see e.g.\ \cite{BaroniG07b} and \cite{NievesOZ11})
are intermediate semantics,
i.e.\ semantics which %
yield results
lying
between two existing semantics. The introduction of $\sigma$-$\tau$-intermediate semantics can be motivated by deleting 
\textit{undesired} (or add \textit{desired}) $\tau$-extensions while 
guaranteeing all reasonable positions w.r.t.\ $\sigma$. 
In other words, \interm{\sigma}{\tau} semantics can be seen 
as sceptical or credulous acceptance shifts within the range of $\sigma$ and $\tau$. 

A natural question is whether we can make any statements
about compatible kernels of intermediate semantics.
In particular, if semantics $\sigma$ and $\tau$ are compatible with some kernel $\k$,
is then every \interm{\sigma}{\tau} semantics $\k$-compatible.
The following example answers this question negatively.

\begin{example}
\label{ex:stagle}
Recall from Theorem~\ref{the:strong} that both
stable and stage semantics are compatible with $k(\stb)$,
i.e.\ $\F\equiv^{\stb}_E\G \ToT \F\equiv^{\stg}_E~\G \ToT  \F^{k(\stb)} = \G^{k(\stb)}$.
Now we define the following \interm{\stb}{\stage} semantics, say \textit{stagle} semantics:
Given an AF $\F = (A,R)$, $S\in\sta(\F)$ iff
$S\in \cf(\F)$, $S^+_\F \cup S^-_\F = A$ and
for every $T\in cf(\F)$ we have $S^+_\F \not\subset T^+_\F$.
Obviously, it holds that $\stb\subseteq\sta\subseteq\stg$ and 
$\stb\neq\sta$ as well as $\sta\neq\stg$,
as witnessed by the following AF $\F$:
\begin{center}
\begin{tikzpicture}
    \node (A) at (1.0,0.0) [circle, thick, draw, label = left:$\F:$]{$a$};
    \node (B) at (2.5,0.0) [circle, thick, draw]{$b$};
    \node (C) at (4.0,0.0) [circle, thick, draw]{$c$};
		
\draw[->,thick] (A) to [thick,loop,distance=0.5cm] (A);
\draw[->,thick] (B) to [thick,bend right] (C);
\draw[->,thick] (C) to [thick,bend right] (B);
\draw[->,thick] (A) to [thick,bend right] (B);
\end{tikzpicture}
\end{center}
It is easy to verify that $\stb(\F) = \emptyset \subset \sta(\F) = \{\{b\}\} \subset \stg(\F) = \{\{b\},\{c\}\}$.
We proceed by showing that stagle semantics is not compatible with $k(\stb)$.
To this end consider $\F^{k(\stb)}$, which is depicted below.
\begin{center}
\begin{tikzpicture}
    \node (A) at (1.0,0.0) [circle, thick, draw, label = left:$\F^{k(\stb)}:$]{$a$};
    \node (B) at (2.5,0.0) [circle, thick, draw]{$b$};
    \node (C) at (4.0,0.0) [circle, thick, draw]{$c$};

\draw[->,thick] (A) to [thick,loop,distance=0.5cm] (A);
\draw[->,thick] (B) to [thick,bend right] (C);
\draw[->,thick] (C) to [thick,bend right] (B);

\end{tikzpicture}
\end{center}
Now, $\sta\left(\F^{k(\stb)}\right) = \{\{b\},\{c\}\}$ witnesses $\F\not\equiv^{\sta}\F^{k(\stb)}$ and therefore, $\F\not\equiv^{\sta}_E\F^{k(\stb)}$. Since $\F^{k(\stb)} = \left(\F^{k(\stb)}\right)^{k(\stb)}$ we are done, i.e. stagle semantics is indeed not compatible with the stable kernel.

\end{example}

It is the main result of this section that compatibility of intermediate semantics w.r.t.\ a certain kernel can be guaranteed if verifiability w.r.t.\ a certain class is presumed. The provided characterization theorems generalize former results presented in \cite{strong}. Moreover, due to the abstract character of the theorems the results are applicable to semantics which may be defined in the future.

Before turning to the characterization theorems we state some implications of verifiability. %
In particular, under the assumption that $\sigma$ is verifiable by a certain class, equality of certain kernels 
implies expansion equivalence w.r.t.\ $\sigma$.

\begin{proposition}
\label{pro:cf-range-verifiable}
For any $+$-verifiable semantics $\sigma$ we have
\[
\F^{k(\stb)} = \G^{k(\stb)} \To \F\equiv^\sigma_E\G.
\]
\end{proposition}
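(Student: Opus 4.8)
The plan is to show that equality of stable kernels preserves the verification-class data induced by $\r^+$ under arbitrary expansions, and then invoke $+$-verifiability to conclude. Concretely, I want to prove that $\F^{k(\stb)} = \G^{k(\stb)}$ implies that for every AF $\H$ we have $\widetilde{(\F\cup\H)}^+ = \widetilde{(\G\cup\H)}^+$ together with $A(\F\cup\H) = A(\G\cup\H)$; the latter is immediate since the stable kernel is node-preserving, so $A(\F)=A(\G)$ and hence $A(\F\cup\H)=A(\G\cup\H)$. Once the verification-class equality is established, $+$-verifiability of $\sigma$ yields $\sigma(\F\cup\H) = \gamma_\sigma(\widetilde{(\F\cup\H)}^+, A(\F\cup\H)) = \gamma_\sigma(\widetilde{(\G\cup\H)}^+, A(\G\cup\H)) = \sigma(\G\cup\H)$ for every $\H$, which is exactly $\F\equiv^\sigma_E\G$.

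The core of the argument is therefore a statement about conflict-free sets and their ranges. First I would record the standard facts about the stable kernel that follow from Definition~\ref{def:kernel}: it only removes outgoing edges $(a,b)$ with $a\neq b$ from self-attacking arguments $a$, so it is node- and loop-preserving, and crucially $\cf(\F) = \cf(\F^{k(\stb)})$ (removing such edges cannot create or destroy conflict-freeness, because a self-loop $a$ can never belong to a conflict-free set anyway). I would also note that these removed edges do not affect the range of any conflict-free set: if $a$ is self-attacking then $a\notin S$ for any $S\in\cf(\F)$, but if $(a,b)\in R(\F)$ with $a$ a self-loop, the edge $(a,b)$ contributes to $S^+_\F$ only when $a\in S$, which never happens. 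Hence for every $S\in\cf(\F)$ the range satisfies $S^+_\F = S^+_{\F^{k(\stb)}}$, giving $\widetilde{\F}^+ = \widetilde{(\F^{k(\stb)})}^+$.

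The key step is then to combine this with the fact that expansion equivalence, or rather kernel equality, is a congruence with respect to $\cup$. Since $\F^{k(\stb)} = \G^{k(\stb)}$, I would argue that $(\F\cup\H)^{k(\stb)} = (\G\cup\H)^{k(\stb)}$ for every $\H$; this uses that the stable kernel of a union is determined by the kernels of the parts (the set of self-loops and the surviving edges are computed pointwise from the combined framework, and equal kernels on the $\F,\G$ side force equal kernels after adding the same $\H$). Applying the range-invariance observation from the previous paragraph to $\F\cup\H$ and $\G\cup\H$ then gives
\[
\widetilde{(\F\cup\H)}^+ = \widetilde{((\F\cup\H)^{k(\stb)})}^+ = \widetilde{((\G\cup\H)^{k(\stb)})}^+ = \widetilde{(\G\cup\H)}^+,
\]
which is the desired verification-class equality.

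The main obstacle I expect is the congruence step: verifying carefully that $\F^{k(\stb)}=\G^{k(\stb)}$ genuinely propagates to $(\F\cup\H)^{k(\stb)}=(\G\cup\H)^{k(\stb)}$. One must check that adding $\H$ cannot turn a previously non-self-attacking argument into a self-loop in a way that differs between the $\F$ and $\G$ sides, and that newly introduced edges from $\H$ are deleted consistently; this rests on the observation that the stable kernel's deletion rule for an edge $(a,b)$ depends only on whether $a$ is a self-loop in the whole framework, and loop-preservation plus $A(\F)=A(\G)$ ensure $\F\cup\H$ and $\G\cup\H$ share the same self-loops. This is really the same kind of reasoning underlying Theorem~\ref{the:strong}, so I would lean on the established compatibility of $\stb$ with $k(\stb)$ rather than reprove it from scratch.
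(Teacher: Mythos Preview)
Your proposal is correct and follows essentially the same approach as the paper: establish (citing \cite{strong}) that $\F^{k(\stb)} = \G^{k(\stb)}$ propagates to $(\F\cup\H)^{k(\stb)} = (\G\cup\H)^{k(\stb)}$, show that the stable kernel leaves $\widetilde{(\cdot)}^+$ unchanged because it only removes outgoing attacks from self-loops, and then apply $+$-verifiability. The paper packages the latter step as the intermediate equality $\sigma(\F)=\sigma(\F^{k(\stb)})$ rather than chaining the verification classes directly, but the underlying argument is identical; if anything, your version is slightly cleaner in explicitly carrying the $A(\F\cup\H)=A(\G\cup\H)$ argument that the definition of verifiability requires.
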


\begin{proof}
In \cite{strong} it was shown that 
\mbox{$\F^{k(\stb)} = \G^{k(\stb)} \To (\F\cup\H)^{k(\stb)} = (\G\cup\H)^{k(\stb)}$} (i).
Consider now a $+$-verifiable semantics $\sigma$. In order to show 
$\sigma\left(\F\right) = \sigma\left(\F^{k(\stb)}\right)$ (ii)
we prove $\widetilde{\F}^+ = \widetilde{\F^{k(\stb)}}^+$ (*) first.
It is easy to see that $S\in\cf(\F)$ iff $S\in\cf\left(\F^{k(\stb)}\right)$.
Furthermore, since $k(\stb)$ deletes an attack $(a,b)$ only if $a$ is self-defeating
we deduce that ranges does not change as long as conflict-free sets are considered.
Thus, $\sigma(\F) =_{\text{ \tiny (Def.)}} \gamma_{\sigma}(\widetilde{\F}^+) =_{\text{ \tiny (*)}} \gamma_{\sigma}(\widetilde{\F^{k(\stb)}}^+) =_{\text{ \tiny (Def.)}} \sigma(\F^{k(\stb)})$. 

Now assume that $\F^{k(\stb)} = \G^{k(\stb)}$ and
let $S\in\sigma(\F\cup~\H)$ for some AF $\H$.
We have to show that $S\in\sigma(\G\cup\H)$.
Applying (ii) we obtain $S\in\sigma\left((\F\cup\H)^{k(\stb)}\right)$. Furthermore, using (i) we deduce
$S\in\sigma\left((\G\cup\H)^{k(\stb)}\right)$.
Finally, $S\in\sigma\left(\G\cup\H\right)$ by applying (ii),
which concludes the proof.
\end{proof}

The following results can be shown in a similar manner. 

\begin{proposition}
\label{pro:cf-range-?-verifiable}
For any $+\mp$-verifiable semantics $\sigma$ we have
\[
\F^{k(\ad)} = \G^{k(\ad)} \To \F\equiv^\sigma_E\G.
\]
\end{proposition}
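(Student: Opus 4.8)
The plan is to mirror the structure of the proof of Proposition~\ref{pro:cf-range-verifiable}, substituting the admissible kernel $k(\ad)$ for the stable kernel $k(\stb)$ and the richer neighborhood function $\r^{+\mp}$ for $\r^{+}$. The argument rests on two ingredients that are available from the earlier material: first, the fact (from \cite{strong}) that expansion equivalence is a congruence w.r.t.\ $\cup$ at the level of the admissible kernel, i.e.\ $\F^{k(\ad)} = \G^{k(\ad)} \To (\F\cup\H)^{k(\ad)} = (\G\cup\H)^{k(\ad)}$; and second, the key insensitivity claim that a $+\mp$-verifiable semantics $\sigma$ satisfies $\sigma(\F) = \sigma(\F^{k(\ad)})$ for every AF $\F$.

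First I would establish the insensitivity claim, which is the heart of the matter, by proving that $\widetilde{\F}^{+\mp} = \widetilde{\F^{k(\ad)}}^{+\mp}$. Since $\r^{+\mp}(S^+_\F,S^-_\F) = (S^+_\F, S^-_\F\sm S^+_\F)$, this amounts to showing that for every conflict-free set $S$, both the range $S^+_\F$ and the ``undefended attackers'' $S^-_\F\sm S^+_\F$ are preserved under the admissible kernel. Recall that $R^{k(\ad)}$ deletes an attack $(a,b)$ with $a\neq b$ only when $(a,a)\in R$ and $\{(b,a),(b,b)\}\cap R\neq\emptyset$. I would argue, exactly as in the stable case, that conflict-freeness is preserved, so $\cf(\F)=\cf(\F^{k(\ad)})$. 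Then I would check that no deleted attack can alter the range or the set of undefended attackers of a conflict-free set: a deleted attack $(a,b)$ emanates from a self-attacker $a$, so $a$ lies in no conflict-free set $S$, and the deletion condition guarantees $b$ is already attacked back by itself or attacks $a$, so the relevant outgoing/incoming structure seen by conflict-free sets is unchanged. Once $\widetilde{\F}^{+\mp} = \widetilde{\F^{k(\ad)}}^{+\mp}$ is in hand, applying the verifiability criterion $\gamma_\sigma$ to both sides yields $\sigma(\F)=\sigma(\F^{k(\ad)})$.

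With these two ingredients, the conclusion follows by the same three-step chase used before. Assuming $\F^{k(\ad)} = \G^{k(\ad)}$ and taking $S\in\sigma(\F\cup\H)$ for an arbitrary AF $\H$, I would apply the insensitivity claim to move to $S\in\sigma((\F\cup\H)^{k(\ad)})$, then use the congruence property to rewrite $(\F\cup\H)^{k(\ad)} = (\G\cup\H)^{k(\ad)}$ and obtain $S\in\sigma((\G\cup\H)^{k(\ad)})$, and finally apply insensitivity once more to land at $S\in\sigma(\G\cup\H)$. Symmetry gives the reverse inclusion, establishing $\F\equiv^\sigma_E\G$.

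The main obstacle, as the paper itself signals by saying only that the result follows ``in a similar manner,'' is the range-and-anti-range preservation step for the admissible kernel. The admissible kernel deletes a strictly more subtle family of attacks than the stable kernel (its deletion condition involves $b$ as well as $a$), so I need to verify carefully that neither component of $\r^{+\mp}$ is disturbed for conflict-free sets---in particular that the anti-range modulo range, $S^-_\F\sm S^+_\F$, is genuinely invariant, since a deleted incoming attack could in principle leave an attacker that was formerly counted. The deletion condition $\{(b,a),(b,b)\}\cap R\neq\emptyset$ is exactly what rescues this: it ensures any attacker $a$ removed from $S^-_\F$ was already self-defeating and, crucially, was already attacked by $S$ or outside the anti-range, so its removal does not change $S^-_\F\sm S^+_\F$. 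Pinning down this case distinction cleanly is the one place where genuine work, rather than transcription from the stable case, is required.
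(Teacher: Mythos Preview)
Your proposal is correct and follows essentially the same route the paper intends: the paper itself only states that the result ``can be shown in a similar manner'' to Proposition~\ref{pro:cf-range-verifiable}, and you have correctly spelled out that template---congruence of $k(\ad)$ under union from \cite{strong}, invariance $\widetilde{\F}^{+\mp} = \widetilde{\F^{k(\ad)}}^{+\mp}$, and the three-step chase. Your case analysis for the preservation of $S^-_\F\sm S^+_\F$ is the right one (if a deleted attack $(a,b)$ has $b\in S$ then conflict-freeness of $S$ forces $(b,b)\notin R$, hence $(b,a)\in R$, so $a\in S^+_\F$ already); the phrasing ``or outside the anti-range'' in your last paragraph is slightly imprecise, but the substantive argument is sound.
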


\begin{proposition}
\label{pro:cf-range-inrange-verifiable}
For any $+-$-verifiable semantics $\sigma$ we have
\[
\F^{k(\co)} = \G^{k(\co)} \To \F \equiv^\sigma_E \G.
\]
\end{proposition}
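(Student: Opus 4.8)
The plan is to follow the exact same proof template used for Proposition~\ref{pro:cf-range-verifiable}, merely substituting the complete kernel $k(\co)$ for the stable kernel and the verification class $\r^{+-}$ for $\r^+$. The proof rests on two independent facts. First, a \emph{kernel congruence} fact: equality of complete kernels is preserved under union, i.e.\ $\F^{k(\co)} = \G^{k(\co)} \To (\F\cup\H)^{k(\co)} = (\G\cup\H)^{k(\co)}$ for every AF $\H$ (i). This is exactly the ingredient supplied by \cite{strong} for each of the kernels, and it is what makes Theorem~\ref{the:strong} work; I would cite it as established there (it is the reason $k(\co)$ characterises $\equiv^{\co}_E$). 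Second, a \emph{kernel invisibility} fact for $\r^{+-}$: I must show $\widetilde{\F}^{+-} = \widetilde{\F^{k(\co)}}^{+-}$ (*), from which $\sigma(\F) = \sigma(\F^{k(\co)})$ (ii) follows for any $+-$-verifiable $\sigma$, since $\sigma(\F) = \gamma_\sigma(\widetilde{\F}^{+-}_A) = \gamma_\sigma(\widetilde{\F^{k(\co)}}^{+-}_A) = \sigma(\F^{k(\co)})$, using that $A(\F) = A(\F^{k(\co)})$ as the complete kernel is node-preserving.

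The core work is establishing (*). The complete kernel deletes an attack $(a,b)$ only when $a \neq b$ and both $a$ and $b$ carry self-loops, i.e.\ $(a,a),(b,b)\in R(\F)$. I would argue in two parts. First, conflict-freeness is unaffected: since every deleted attack $(a,b)$ has both endpoints self-attacking, neither $a$ nor $b$ can belong to any conflict-free set, so $\cf(\F) = \cf(\F^{k(\co)})$; this mirrors the remark $\cf(\H) = \cf(\H^{k(\na)})$ used earlier and the corresponding observation in Lemma~\ref{lem:strad}. Second, I must check that for each $S\in\cf(\F)$ both the range $S^+_\F$ and the anti-range $S^-_\F$ are preserved. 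For the range: a deleted attack $(a,b)$ has self-attacking source $a$, hence $a\notin S$, so this attack never contributed to $S^+_\F$ via membership; and it cannot be the sole witness that $b\in S^+_\F$ because $b$ is itself self-attacking (hence $b\notin S$), but one must confirm $b$'s membership in the range is governed only by attacks \emph{from} $S$, none of which are deleted (deleted attacks originate at self-looping, hence non-$S$, arguments). For the anti-range: a deleted attack $(a,b)$ could only affect $S^-_\F$ if $b\in S$, but $b$ is self-attacking so $b\notin S$. Thus neither neighborhood changes, giving $\r^{+-}(S^+_\F,S^-_\F) = \r^{+-}(S^+_{\F^{k(\co)}},S^-_{\F^{k(\co)}})$ and hence (*).

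The hard part, and the step I would scrutinise most carefully, is the range-preservation direction of (*): one must be certain that no deleted attack $(a,b)$ is the \emph{unique} reason some argument lies in $S^+_\F$. The comfort here is precisely the structure of the complete kernel — every deleted edge has \emph{both} endpoints self-attacking, so both endpoints are excluded from every conflict-free set, which is a stronger condition than for the stable kernel (where only the source need be self-attacking). This makes the argument strictly easier than in Proposition~\ref{pro:cf-range-verifiable}, where only self-defeat of the source was available. Once (i) and (ii) are in hand, the conclusion assembles identically to the earlier proof: assuming $\F^{k(\co)} = \G^{k(\co)}$ and taking $S\in\sigma(\F\cup\H)$, apply (ii) to move to $(\F\cup\H)^{k(\co)}$, apply (i) to pass to $(\G\cup\H)^{k(\co)}$, and apply (ii) again to land in $\sigma(\G\cup\H)$, establishing $\F\equiv^\sigma_E\G$.
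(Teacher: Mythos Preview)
Your proposal is correct and follows essentially the same approach as the paper, which explicitly states that this result ``can be shown in a similar manner'' to Proposition~\ref{pro:cf-range-verifiable} without giving a separate proof. Your verification of (*) --- that both range and anti-range of conflict-free sets are preserved under $k(\co)$ because every deleted attack has \emph{both} endpoints self-attacking and hence outside every conflict-free set --- is exactly the adaptation the paper intends.
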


\begin{proposition}
\label{pro:cf-inrange-pm-verifiable}
For any $-\pm$-verifiable semantics $\sigma$ we have
\[
\F^{k(\gr)} = \G^{k(\gr)} \To \F\equiv^\sigma_E\G.
\]
\end{proposition}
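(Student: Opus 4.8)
The plan is to follow the template established in the proof of Proposition~\ref{pro:cf-range-verifiable}, which factors the argument into two reusable ingredients. The first ingredient is a congruence property of the grounded kernel, namely that $\F^{k(\gr)} = \G^{k(\gr)}$ implies $(\F\cup\H)^{k(\gr)} = (\G\cup\H)^{k(\gr)}$ for every AF $\H$; this is exactly the kind of fact shown for the stable kernel in \cite{strong}, and I would cite the analogous result for the grounded kernel from the same source. The second ingredient is that $\sigma$ is insensitive to the grounded kernel, i.e.\ $\sigma(\F) = \sigma(\F^{k(\gr)})$ for every $\F$. Once both are in place, the conclusion follows by the same three-line chase: for $S\in\sigma(\F\cup\H)$, apply kernel-insensitivity to land in $\sigma((\F\cup\H)^{k(\gr)})$, use the congruence together with $\F^{k(\gr)}=\G^{k(\gr)}$ to rewrite this as $\sigma((\G\cup\H)^{k(\gr)})$, and apply kernel-insensitivity once more to obtain $S\in\sigma(\G\cup\H)$.

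The heart of the proof, and the step I expect to cost real work, is establishing kernel-insensitivity $\sigma(\F) = \sigma(\F^{k(\gr)})$ for an arbitrary $-\pm$-verifiable $\sigma$. Since $\sigma$ is $-\pm$-verifiable, there is a criterion $\gamma_\sigma$ with $\sigma(\F)=\gamma_\sigma(\widetilde{\F}^{-\pm},A(\F))$, so it suffices to prove the purely combinatorial identity
\[
\widetilde{\F}^{-\pm} = \widetilde{\F^{k(\gr)}}^{-\pm}
\]
together with $A(\F)=A(\F^{k(\gr)})$. The latter is immediate since the grounded kernel is node-preserving. For the former I would first recall, as in Lemma~\ref{lem:strad}, that $\cf(\F)=\cf(\F^{k(\gr)})$, so both classes range over the same conflict-free sets $S$; it then remains to check that for each such $S$ the pair $(S^-_\F\sm\text{(nothing)},\,S^-_\F,\,S^\pm_\F)$ recorded by $\r^{-\pm}$ coincides across $\F$ and $\F^{k(\gr)}$, i.e.\ that $S^-_\F = S^-_{\F^{k(\gr)}}$ and $S^\pm_\F = S^\pm_{\F^{k(\gr)}}$.

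The key point to verify is that deleting a grounded-kernel-redundant attack $(a,b)$ (where $a\neq b$, $(b,b)\in R$, and $\{(a,a),(b,a)\}\cap R\neq\emptyset$) changes neither the anti-range $S^-_\F = S\cup\{x\mid x\attacks S\}$ nor the set-difference $S^\pm_\F = S\sm S^-_\F$ for any conflict-free $S$. Here the self-loop condition $(b,b)\in R$ is decisive: the deleted attack points \emph{into} the self-loop $b$. I would argue that $b\notin S$ for any conflict-free $S$ containing $b$ would force a conflict via $(b,b)$, so $b$ is never in a conflict-free set, and that the only way $(a,b)$ could contribute to $S^-_\F$ is if $b\in S$ — which cannot happen. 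A symmetric bookkeeping check handles $S^\pm_\F$, showing the deleted edges are invisible to the $\r^{-\pm}$ neighborhood function on conflict-free sets. The main obstacle is therefore the case analysis proving the edges deleted by $k(\gr)$ affect neither $S^-$ nor $S^\pm$ for conflict-free $S$; I would organize it around the observation that each such edge has its target on a self-loop and hence outside every conflict-free set. With identity $(\ast)$ in hand, kernel-insensitivity follows directly from $-\pm$-verifiability, and the remainder of the proof is the routine three-step equivalence chase described above.
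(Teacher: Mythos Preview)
Your overall strategy is exactly the paper's (implicit) one: it defers Propositions~\ref{pro:cf-range-?-verifiable}--\ref{pro:cf-verifiable} to the template of Proposition~\ref{pro:cf-range-verifiable}, so the two-ingredient decomposition (congruence of $k(\gr)$ from \cite{strong}, plus invariance of the $-\pm$ verification class under $k(\gr)$) is precisely what is expected.

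There is, however, a genuine gap in your execution of the invariance step. First a notational slip: $\r^{\pm}$ applied to $(S^+_\F,S^-_\F)$ gives $S^+_\F\sm S^-_\F$, not $S\sm S^-_\F$ as you wrote (the latter is always empty since $S\subseteq S^-_\F$). More importantly, your organizing observation ``each deleted edge has its target on a self-loop, hence outside every conflict-free set'' does settle $S^-_\F = S^-_{\F^{k(\gr)}}$ (a deleted edge $(a,b)$ could only add $a$ to $S^-_\F$ if $b\in S$, which is impossible), but it does \emph{not} by itself handle $S^+_\F\sm S^-_\F$, and the argument is not ``symmetric''. The danger case is $a\in S$: then $(a,b)$ puts $b$ into $S^+_\F$, and deleting it might remove $b$ from $S^+_\F$. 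Here you must invoke the \emph{other} half of the grounded-kernel condition: since $a\in S$ with $S$ conflict-free, $(a,a)\notin R$, so the disjunction $\{(a,a),(b,a)\}\cap R\neq\emptyset$ forces $(b,a)\in R$; hence $b$ attacks $a\in S$, so $b\in S^-_\F$, and thus $b\notin S^+_\F\sm S^-_\F$ in either framework. With this case filled in, $\widetilde{\F}^{-\pm}=\widetilde{\F^{k(\gr)}}^{-\pm}$ follows and the rest of your proof goes through.
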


\begin{proposition}
\label{pro:cf-verifiable}
For any $\epsilon$-verifiable semantics $\sigma$ we have
\[
\F^{k(\na)} = \G^{k(\na)} \To \F\equiv^\sigma_E\G.
\]
\end{proposition}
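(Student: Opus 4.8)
The plan is to follow the same two-step template used in the proof of Proposition~\ref{pro:cf-range-verifiable}, replacing the stable kernel by the naive kernel and the class $\r^+$ by $\r^\epsilon$. Concretely, I would first establish the congruence property of the naive kernel w.r.t.\ $\cup$, namely $\F^{k(\na)} = \G^{k(\na)} \To (\F\cup\H)^{k(\na)} = (\G\cup\H)^{k(\na)}$ for every AF $\H$ (call this (i)). Second, I would show that the naive kernel preserves the information captured by the $\r^\epsilon$ class, i.e.\ $\widetilde{\F}^\epsilon = \widetilde{\F^{k(\na)}}^\epsilon$ (call this (*)), from which I can conclude $\sigma(\F) = \sigma(\F^{k(\na)})$ for any $\epsilon$-verifiable $\sigma$ (call this (ii)). The conclusion then follows by the identical chain of reasoning: for $S\in\sigma(\F\cup\H)$, apply (ii) to move to $(\F\cup\H)^{k(\na)}$, apply (i) to swap $\F$ for $\G$, and apply (ii) again to return to $\G\cup\H$.

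For step (*) the key observation is that the $\r^\epsilon$ class records exactly the conflict-free sets (since $\r^\epsilon(S^+,S^-) = \emptyset$ carries no neighborhood information), so $\widetilde{\F}^\epsilon$ is determined by $\cf(\F)$ together with $A(\F)$. Hence (*) reduces to showing $\cf(\F) = \cf(\F^{k(\na)})$ and $A(\F) = A(\F^{k(\na)})$. The latter is immediate from Definition~\ref{def:naivekernel}, which only modifies the attack relation. For the former I would reuse the fact, already noted in the proof of the naive-kernel characterization theorem, that $\cf(\H) = \cf(\H^{k(\na)})$ for every AF $\H$: the naive kernel only adds an attack $(a,b)$ when $\{(a,a),(b,a),(b,b)\}\cap R \neq \emptyset$, and in each of these cases $\{a,b\}$ was already non-conflict-free in $\F$, so no conflict-free set is destroyed, and since we only add attacks no new conflict-free set is created.

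The main obstacle I expect is establishing the congruence property (i), since the naive kernel is the one kernel in the paper that \emph{adds} rather than deletes attacks, so the usual ``deletion commutes with union'' arguments do not transfer verbatim. The delicate point is that whether an edge $(a,b)$ is added to $(\F\cup\H)^{k(\na)}$ depends on the combined attack relation $R(\F)\cup R(\H)$, so a self-loop or reverse edge contributed by $\H$ can trigger an addition that was absent in $\F^{k(\na)}$ alone. I would argue (i) by a direct edge-by-edge analysis: assuming $\F^{k(\na)} = \G^{k(\na)}$ (which in particular forces $A(\F)=A(\G)$, $L(\F)=L(\G)$, and $\cf(\F)=\cf(\G)$), I would show that for each potential edge $(a,b)$ with $a\neq b$ the triggering condition $\{(a,a),(b,a),(b,b)\}\cap (R(\F)\cup R(\H)) \neq \emptyset$ evaluates identically whether we use $R(\F)$ or $R(\G)$, because the relevant self-loops and reverse edges are already synchronized between $\F$ and $\G$ via the equality of their naive kernels. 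This case distinction is the technical heart of the argument, but it is routine once the preservation of loops and conflict-free sets under the kernel is in hand; everything else is the mechanical three-step substitution already rehearsed in Proposition~\ref{pro:cf-range-verifiable}.
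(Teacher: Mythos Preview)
Your proposal is correct and mirrors exactly the approach the paper intends: the paper does not spell out a proof for Proposition~\ref{pro:cf-verifiable} but states that it ``can be shown in a similar manner'' to Proposition~\ref{pro:cf-range-verifiable}, and your two-step template (congruence of $k(\na)$ under $\cup$, plus $\cf$-preservation yielding $\widetilde{\F}^\epsilon = \widetilde{\F^{k(\na)}}^\epsilon$) is precisely that adaptation. One small sharpening for step (i): when arguing edge-by-edge, remember that membership of $(a,b)$ in $R((\F\cup\H)^{k(\na)})$ is governed by the full condition $\{(a,a),(a,b),(b,a),(b,b)\}\cap (R(\F)\cup R(\H))\neq\emptyset$ (not just the triggering part), and this is equivalent to $\{a,b\}\notin\cf(\F\cup\H)$, so the already-established equalities $A(\F)=A(\G)$ and $\cf(\F)=\cf(\G)$ immediately give $\cf(\F\cup\H)=\cf(\G\cup\H)$ and hence the congruence.
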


We proceed with general characterization theorems. The first one states that \interm{\stb}{\stage} semantics are compatible with stable kernel if $+$-verifiability is given. Consequently, stagle semantics as defined in Example~\ref{ex:stagle} can not be $+$-verifiable. 

\begin{theorem}
\label{the:stablefirst}
Given a semantics $\sigma$ which is $+$-verifiable and \interm{\stb}{\stg}, it holds that
\[
\F^{k(\stb)} = \G^{k(\stb)} \ToT \F\equiv^\sigma_E\G.
\]
\end{theorem}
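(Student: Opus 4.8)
The plan is to prove the two implications separately, obtaining the forward direction from the machinery already developed and reserving the real work for the converse.

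For the direction $(\To)$ I would simply invoke Proposition~\ref{pro:cf-range-verifiable}: since $\sigma$ is $+$-verifiable, $\F^{k(\stb)} = \G^{k(\stb)}$ already yields $\F\equiv^\sigma_E\G$. Here the \interm{\stb}{\stg} assumption is not needed at all.

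For the direction $(\oT)$ I would argue by contraposition, establishing $\F^{k(\stb)} \neq \G^{k(\stb)} \To \F\not\equiv^\sigma_E\G$. By Theorem~\ref{the:strong} the inequality of the stable kernels gives $\F\not\equiv^{\stb}_E\G$, so there is an AF $\H_0$ with $\stb(\F\cup\H_0)\neq\stb(\G\cup\H_0)$. Since $\equiv^{\stb}_E$ forces equal argument sets, I may first add the (finitely many) arguments of $A(\F)\cup A(\G)$ as isolated nodes to $\H_0$, so that $\F\cup\H_0$ and $\G\cup\H_0$ share one argument set $A^*$ while their stable extensions still differ; this normalisation is routine bookkeeping. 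The crucial idea is then to expand both frameworks by a single fresh \emph{universal} argument $t$ that attacks and is attacked by every element of $A^*$ but carries no self-loop, yielding a common expansion $\H$. On \emph{any} AF $K$ with argument set $A^*$ this gadget has a uniform effect: $\{t\}$ is conflict-free and attacks all of $A^*$, hence is always stable, so the maximal range is full and one computes $\stb(K')=\stg(K')=\{\{t\}\}\cup\stb(K)$, where $K'$ denotes $K$ together with $t$ and its attacks.

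Because $\stb\subseteq\sigma\subseteq\stg$, the coincidence of stable and stage on $K'$ squeezes $\sigma$ between them and forces $\sigma(K')=\{\{t\}\}\cup\stb(K)$ — note that the sandwich, not $+$-verifiability, is what drives the converse. Applying this to $K=\F\cup\H_0$ and $K=\G\cup\H_0$, and recalling $\stb(\F\cup\H_0)\neq\stb(\G\cup\H_0)$ together with $t\notin A^*$ (so the common member $\{t\}$ cannot blur the difference), I obtain $\sigma(\F\cup\H)\neq\sigma(\G\cup\H)$, that is $\F\not\equiv^\sigma_E\G$. The main obstacle, and the heart of the argument, is exactly the design of $t$: it must simultaneously guarantee the existence of a stable extension (thereby collapsing stage onto stable) and \emph{transmit} rather than erase the stable-level difference between the two frameworks. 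Verifying $\stb(K')=\stg(K')=\{\{t\}\}\cup\stb(K)$ is the one computation I would carry out in full detail.
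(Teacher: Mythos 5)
You follow essentially the same route as the paper: the forward direction is exactly the paper's (an appeal to Proposition~\ref{pro:cf-range-verifiable}), and the converse is the paper's contrapositive argument --- obtain from Theorem~\ref{the:strong} a witness $\H_0$ with $\stb(\F\cup\H_0)\neq\stb(\G\cup\H_0)$, add one fresh argument in mutual conflict with everything else so that stable extensions are guaranteed to exist, conclude $\stb=\stg$ on the expanded frameworks, and let the sandwich $\stb\subseteq\sigma\subseteq\stg$ force $\sigma$ to agree with $\stb$ there. Your derivation of the witness (directly from item~1 of Theorem~\ref{the:strong}) is in fact tidier than the paper's, which detours through stage equivalence.

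However, the step you wave off as ``routine bookkeeping'' is a genuine gap, and it is precisely the point at which the paper's own write-up is loose as well. Adding the arguments of $A(\F)\cup A(\G)$ as isolated nodes to $\H_0$ need \emph{not} preserve the stable difference: unattacked arguments join every stable extension, and this can collapse the two sets of extensions onto each other. Concretely, take $\F=(\{x\},\emptyset)$ and $\G=(\{y\},\emptyset)$ (so $\F^{k(\stb)}\neq\G^{k(\stb)}$) with $\H_0$ the empty framework: then $\stb(\F\cup\H_0)=\{\{x\}\}\neq\{\{y\}\}=\stb(\G\cup\H_0)$, but after your normalisation both frameworks are literally the same AF $(\{x,y\},\emptyset)$, so no subsequent gadget $t$ can separate them. (The paper has the same blind spot: its equation $\stb(\F\cup\H')=\stb(\F\cup\H)\cup\{\{a\}\}$ ignores the arguments of $A(\G)\sm A(\F\cup\H)$, which are unattacked in $\F\cup\H'$ except by $a$.) The repair is a case distinction you are missing. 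If $\stb(\F\cup\H_0)$ and $\stb(\G\cup\H_0)$ are both non-empty, then $\stb=\stg$ holds on both frameworks already, so the sandwich gives $\sigma(\F\cup\H_0)=\stb(\F\cup\H_0)\neq\stb(\G\cup\H_0)=\sigma(\G\cup\H_0)$ and no gadget is needed at all. Otherwise exactly one side, say $\G\cup\H_0$, has no stable extension; since isolated arguments cannot create stable extensions, after your normalisation-plus-$t$ construction one obtains $\stb(K'_{\G})=\{\{t\}\}$ while $\stb(K'_{\F})$ properly contains $\{\{t\}\}$, hence the two still differ, both are non-empty, and your sandwich argument closes the proof.
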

\begin{proof}
($\To$) Follows directly from Proposition~\ref{pro:cf-range-verifiable}.

\noindent
($\oT$)
We show the contrapositive, i.e.\ $\F^{k(\stb)} \neq \G^{k(\stb)} \To \F\not\equiv^\sigma_E\G$.
Assuming $\F^{k(\stb)} \neq \G^{k(\stb)}$ implies $\F \not\equiv^\stg_E \G$,
i.e.\ there exists an AF $\H$ such that $\stg(\F \cup \H) \neq \stg(\G \cup \H)$ and therefore, $\stb(\F \cup \H) \neq \stb(\G \cup \H)$.
Let $B = A(\F) \cup A(\G) \cup A(\H)$ and $\H' = (B \cup \{a\},\{(a,b),(b,a) \mid b \in B\})$.
It is easy to see that $\stb(\F \cup \H') = \stb(\F \cup \H) \cup \{\{a\}\}$ and
$\stb(\G \cup \H') = \stb(\G \cup \H) \cup \{\{a\}\}$.
Since now both $\stb(\F \cup \H') \neq \emptyset$ and $\stb(\G \cup \H') \neq \emptyset$
it holds that $\stb(\F \cup \H') = \stg(\F \cup \H')$ and $\stb(\G \cup \H') = \stg(\G \cup \H')$.
Hence $\sigma(\F \cup \H') \neq \sigma(\F \cup \H')$,
showing that $\F \not\equiv^\stb_E \G$.
\end{proof}

The following theorems can be shown in a similar manner.
 
\begin{theorem} \label{the:admfirst}
Given a semantics $\sigma$ which is $+\mp$-verifiable and \interm{\rho}{\ad} with $\rho\in\{\semi,\id,\eg\}$,
it holds that
\[
\F^{k(\ad)} = \G^{k(\ad)} \ToT \F\equiv^\sigma_E\G.
\]
\end{theorem}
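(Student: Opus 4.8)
The plan is to prove the two directions separately, following the template of Theorem~\ref{the:stablefirst} but replacing its ``squeeze'' argument by a \emph{clean separation} argument, since the squeeze is no longer available here. For the direction ($\To$) I would simply invoke Proposition~\ref{pro:cf-range-?-verifiable}: as $\sigma$ is $+\mp$-verifiable, $\F^{k(\ad)} = \G^{k(\ad)}$ already yields $\F\equiv^\sigma_E\G$, and the intermediate hypothesis is not even needed here.

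For ($\oT$) I would argue the contrapositive, $\F^{k(\ad)} \neq \G^{k(\ad)} \To \F\not\equiv^\sigma_E\G$. Since $\rho\in\{\semi,\id,\eg\}$, Theorem~\ref{the:strong}(2) gives that $\F^{k(\ad)}\neq\G^{k(\ad)}$ implies $\F\not\equiv^\rho_E\G$, so there is an AF $\H$ and, w.l.o.g., a set $S\in\rho(\F\cup\H)\sm\rho(\G\cup\H)$. The guiding principle is that, because $\rho\subseteq\sigma\subseteq\ad$, it suffices to produce \emph{one} expansion $\H'$ together with a set $S^*$ satisfying $S^*\in\rho(\F\cup\H')$ but $S^*\notin\ad(\G\cup\H')$; then $S^*\in\sigma(\F\cup\H')\sm\sigma(\G\cup\H')$, which proves $\F\not\equiv^\sigma_E\G$. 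It is precisely the intermediate property that lets the lower endpoint $\rho$ (for membership) and the upper endpoint $\ad$ (for non-membership) pin $\sigma$ down from both sides. The easy case is $S\notin\ad(\G\cup\H)$, where $\H'=\H$ and $S^*=S$ work at once.

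The remaining, harder case is when $S$ is admissible on \emph{both} sides, so that the discrepancy lives entirely inside the selection made by $\rho$ (maximality of range for $\semi$, or being the unique ideal/eager set for $\id,\eg$). Here I would deliberately avoid the universal mutual-attack gadget used in Theorem~\ref{the:stablefirst}: no such squeeze exists, because $\rho=\ad$ is impossible (once a nonempty admissible set exists, $\emptyset\in\ad$ is never range-maximal, so $\semi\neq\ad$; and $\id,\eg$ are single-extension while $\ad$ is not), and a universal argument in fact destroys the very difference we need (it collapses $\id$ and $\eg$ to $\{\emptyset\}$ on both sides and realigns $\semi$ with $\stb$). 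Instead I would build $\H'$ around the attack certifying $\F^{k(\ad)}\neq\G^{k(\ad)}$: fixing, w.l.o.g., an attack that survives the admissible kernel in $\F$ but is absent in $\G$, I would attach fresh arguments so that some set $S^*$ defends itself in $\F\cup\H'$ \emph{precisely through} that attack (hence admissible there but not in $\G\cup\H'$), while padding $\H'$ with fresh, suitably attacked dummy arguments that force $S^*$ to be range-maximal (for $\semi$), respectively to be the ideal/eager set (for $\id,\eg$), of $\F\cup\H'$. This reduces the hard case to the clean-separation principle above.

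The step I expect to be the main obstacle is exactly this gadget design: arranging that $S^*$ lands in $\rho$ on the $\F$-side, uniformly across $\semi$, $\id$ and $\eg$, while keeping its admissibility genuinely \emph{contingent} on the distinguishing attack so that it fails on the $\G$-side, and checking that the fresh dummy arguments do not inadvertently re-defend $S^*$ on the $\G$-side or spoil its $\rho$-selection on the $\F$-side. Notably, the verifiability hypothesis carries no weight in this direction; all the leverage comes from the intermediate property combined with the kernel characterisation of $\equiv^\rho_E$.
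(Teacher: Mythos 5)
Your high-level skeleton is sound and agrees with the paper's: the ($\To$) direction is exactly Proposition~\ref{pro:cf-range-?-verifiable} (verifiability is used only there), and for ($\oT$) you correctly diagnose that the squeeze of Theorem~\ref{the:stablefirst} cannot be transplanted --- for $\rho\in\{\semi,\id,\eg\}$ one can only have $\rho(\AF)=\ad(\AF)$ when $\ad(\AF)=\{\emptyset\}$, which destroys any difference --- so it must be replaced by a two-sided pinning: exhibit $\H'$ and $S^*$ with $S^*\in\rho(\F\cup\H')$ and $S^*\notin\ad(\G\cup\H')$, whence $\rho\subseteq\sigma\subseteq\ad$ forces $\sigma(\F\cup\H')\neq\sigma(\G\cup\H')$. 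This is indeed the only way the intermediate hypothesis can be exploited, and it is what the paper's terse ``similar manner'' remark has to mean.

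The genuine gap is that the construction carrying this pinning is never produced. Nothing guarantees that the witness $\H$ obtained from Theorem~\ref{the:strong}(2) falls into your easy case, so the whole proof lives in your ``hard case'', where you only list desiderata for a gadget and yourself flag its design as the main obstacle; that gadget \emph{is} the content of direction ($\oT$). Moreover, starting from the kernel-difference attack commits you to a case analysis over Definition~\ref{def:kernel} (attack absent in $\G$; attack present in both but kernel-deleted in one; self-loop or argument-set differences) and runs straight into the re-defense pitfall you name: e.g.\ attacking the offending argument with a fresh argument to kill its admissibility on the $\F$-side immediately re-defends $S^*$ on the $\G$-side. A route that closes the gap and avoids your hard case entirely is to invoke Theorem~\ref{the:strong}(1) for $\adm$ instead: $\F^{k(\ad)}\neq\G^{k(\ad)}$ yields $\H_0$ and $S$ with $S\in\ad(\F\cup\H_0)\sm\ad(\G\cup\H_0)$, so the discrepancy is already an admissibility discrepancy. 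Let $B$ collect all arguments, let $C$ be the set of attackers of $S$ in $\G\cup\H_0$ that $S$ does not counterattack there (take $C=\emptyset$ if $S$ is not even conflict-free in $\G\cup\H_0$), and let $\H'$ extend $\H_0$ by self-loops on all of $B\sm S$ plus a fresh argument $z$ with attacks $(z,c)$ for all $c\in B\sm(S\cup C)$. Then every conflict-free set of $\F\cup\H'$ is contained in $S\cup\{z\}$; $S\cup\{z\}$ is admissible in $\F\cup\H'$ (no gadget attack targets $S$, and $z$ is unattacked) and has strictly larger range than each of its proper subsets, so it is the unique preferred, the unique semi-stable, the ideal and the eager extension of $\F\cup\H'$; yet it remains non-admissible in $\G\cup\H'$, since the only new potential defenders are $z$'s attacks and these deliberately avoid $C$. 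This single uniform gadget settles all three choices of $\rho$ at once; without it (or your unrealized case-by-case variant), the theorem is not proved.
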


Remember that complete semantics is a \interm{\semi}{\adm} semantics. Furthermore, it is not characterizable by the admissible kernel as already observed in \cite{strong}. Consequently, complete semantics is not $+\mp$-verifiable (as we have shown in Example~\ref{ex:exact_verify} with considerable effort).

\begin{theorem}
\label{the:grdfirst}
Given a semantics $\sigma$ which is $-\pm$-verifiable and \interm{\gr}{\strad},
it holds that
\[
\F^{k(\gr)} = \G^{k(\gr)} \ToT \F\equiv^\sigma_E\G.
\]
\end{theorem}

\section{Conclusions}

In this work we have %
contributed to the analysis and comparison of abstract argumentation semantics.
The main idea of our approach
is to provide a novel categorization 
in terms of the amount of information
required for testing whether a set of arguments is an extension of a certain
semantics.
The resulting notion of verifiability classes allows us to categorize any new semantics (given it is ``rational'')
with respect to the information needed and compare it to other semantics.
Thus our work is 
in the tradition of the principle-based evaluation due to 
\citeauthor{BaroniG07a} 
\shortcite{BaroniG07a} and 
paves the way for a more general view on argumentation semantics, their 
common features, and their inherent differences.

Using our notion of verifiability, we were able to show kernel-compatibility
for certain intermediate semantics.
Concerning concrete semantics, our results yield the following observation:
While preferred, semi-stable, ideal and eager semantics coincide w.r.t.\
strong equivalence, verifiability of these semantics differs.
In fact, preferred and ideal semantics manage to be verifiable with strictly less information.

For future work we envisage an extension of the
notion of verifiability classes in order to categorize
semantics not captured by the approach followed in this paper,
such as $\cft$ \cite{BaroniGG05a}.

\small

\end{document}